\theoremstyle{plain}
\theoremstyle{definition}
\theoremstyle{remark}
\newcommand{\set}[1]{\left\{#1\right\}}
\newcommand{\indicator}{\mathds{1}}
\newcommand{\nmax}{n_{\text{max}}}
\newcommand{\encode}{\text{encode}}
\newcommand{\decode}{\text{decode}}
\newcommand{\tok}[1]{\text{`#1'}}
\newcommand{\deferredproof}{\vspace{-0.7em} \noindent \textit{Proof.} See Appendix~\ref{appendix:proofs}.}
\crefname{ALC@line}{line}{lines}
\Crefname{ALC@line}{Line}{Lines}
\def\slem{SLEM\xspace}
\def\slrs{SLRS\xspace}
\def\tli{TLI\xspace}
\newcommand{\draftcomment}[3]{{\textcolor{#3}{[#1]#2}}}
\renewcommand{\draftcomment}[3]{}  
\newcommand{\roy}[1]{\draftcomment{#1}{\textsc{roy}}{red}}
\newcommand{\nadav}[1]{\draftcomment{#1}{\textsc{nadav}}{blue}}
\setlist{topsep=-0.5em, partopsep=0pt, parsep=0pt, itemsep=0pt, leftmargin=1.5em}
\titlespacing*{\paragraph}{0pt}{0pt}{0.5em}
\def\thm@space@setup{%
  \thm@preskip=1em    
  \thm@postskip=0.2em   
}
\newcommand{\LineLabel}[1]{%
  \refstepcounter{ALC@line}
  \label{#1}
  \addtocounter{ALC@line}{-1}
}
\icmltitlerunning{Lossless Speculative Decoding Algorithms for Heterogeneous Vocabularies}
\begin{document}

\twocolumn[
\icmltitle{Accelerating LLM Inference with Lossless Speculative Decoding Algorithms for Heterogeneous Vocabularies}



\begin{icmlauthorlist}
\icmlauthor{Nadav Timor}{w}
\icmlauthor{Jonathan Mamou}{i}
\icmlauthor{Daniel Korat}{i}
\icmlauthor{Moshe Berchansky}{i}
\icmlauthor{Gaurav Jain}{d}
\icmlauthor{Oren Pereg}{i}
\icmlauthor{Moshe Wasserblat}{i}
\icmlauthor{David Harel}{w}
\end{icmlauthorlist}

\icmlaffiliation{w}{Weizmann Institute of Science}
\icmlaffiliation{i}{Intel Labs}
\icmlaffiliation{d}{d-Matrix}

\icmlcorrespondingauthor{Nadav Timor}{nadav.timor@weizmann.ac.il}

\icmlkeywords{Machine Learning, ICML}

\vskip 0.3in
]

\printAffiliationsAndNotice{}

\begin{abstract}
Accelerating the inference of large language models (LLMs) is a critical challenge in generative AI. Speculative decoding (SD) methods offer substantial efficiency gains by generating multiple tokens using a single target forward pass. However, existing SD approaches require the drafter and target models to share the same vocabulary, thus limiting the pool of possible drafters, often necessitating the training of a drafter from scratch. We present three new SD methods that remove this shared-vocabulary constraint. All three methods preserve the target distribution (i.e., they are lossless) and work with off-the-shelf models without requiring additional training or modifications. Empirically, on summarization, programming, and long-context tasks, our algorithms demonstrate significant speedups of up to $2.8\times$ over standard autoregressive decoding. By enabling any off-the-shelf model to serve as a drafter and requiring no retraining, this work substantially broadens the applicability of the SD framework in practice.
\end{abstract}

\section{Introduction}\label{sec:introduction}
Speculative decoding (SD;  \citealp{leviathan2023fast, chen2023accelerating}) is an effective method for reducing the latency of LLM inference and increasing its throughput. A necessary condition for SD to be effective is that the drafter is sufficiently fast and accurate in approximating the target distribution \citep{timor2025distributed, chen2024magicdec}. State-of-the-art verification methods for SD employ rejection sampling algorithms that are designed to work with a single vocabulary, where the draft tokens are sampled from the same vocabulary as the target tokens \cite{leviathan2023fast, chen2023accelerating, miao2023specinfer, sun2024optimal}. However, often in practice, such drafters are not available—either because the target model is not part of a model family (examples of families include the StarCoder, \citealp{li2023starcoder}; Llama, \citealp{dubey2024llama} and DeepSeek, \citealp{deepseekai2025deepseek}) or the smallest model in the same family remains too large and slow. An alternative approach---training a drafter from scratch \citep{zafrir2024fastdraft}\---is a challenging task that requires computational resources, data, time, and expertise.
Even if you successfully train such a drafter, another problem is that you cannot reuse it for other models with different vocabularies.

\paragraph{Our Contributions.}
We relax a key constraint of the speculative decoding (SD) framework---the requirement that the drafter must use the same vocabulary as the target model. By allowing \emph{heterogeneous} vocabularies, we eliminate the requirement to train a drafter from scratch and enable any model to operate as drafter, thereby significantly broaden the applicability of SD methods. By unlocking any off-the-shelf model to serve as drafter, we were able to find drafters that are more effective even than drafters from the same model family. 
Our main contributions are:
\begin{itemize}
    \item \textbf{\cref{alg:exact-matching} (String-Level Exact Match, \slem):}
    An algorithm that uses plain text as a shared intermediate representation between the draft and target vocabularies, enabling exact matching of tokens. It solves the problem of \emph{non-injective} tokenizers (\cref{sec:non-injective-tokenizers}) to support any off-the-shelf model pair.
    We evaluate the algorithm on summarization, programming, and long-context tasks, demonstrating robust speedups of up to $2.8\times$ over autoregressive decoding.
    \item \textbf{\cref{alg:vocabs-intersection} (Token-Level Intersection, \tli):}
    A purely token-based approach that adjusts the drafter's distribution to sample only from the intersection between the two vocabularies and employs the standard SD verification method. We prove theoretically that this approach outperforms a simple ``union'' strategy by increasing the probability of accepting tokens (\cref{thm:sampling-from-target-vocab-increases-the-ar}). Empirically, \cref{alg:vocabs-intersection} demonstrates significant speedups of up to $1.7\times$ over autoregressive decoding.
    \item \textbf{\cref{alg:string-sd} (String-Level Rejection Sampling, \slrs):}
    A novel verification mechanism that implements rejection sampling at the string level instead of the token level. We prove that it is lossless (\cref{thm:string-sd-is-lossless}) and guarantees higher expected acceptance rates than string-level exact matching, under the same target distribution (\cref{thm:exact-matching-vs-speculative-decoding}). Our theoretical and empirical analysis shows rapid growth in computational cost for vocabularies with longer tokens, thus making this method most suitable for drafters with shorter tokens (\cref{sec:efficient-calculation}).
\end{itemize}

We merged our open-source implementation of \cref{alg:exact-matching} and \cref{alg:vocabs-intersection} into Hugging Face Transformers \cite{wolf2020transformers}, the most popular LLM library, with more than 378,000 repositories and 6,000 open-source packages that depend on it. Independently of our benchmarks, Hugging Face's core maintainers have thoroughly evaluated the effectiveness of \slem and \tli (Algorithms \ref{alg:exact-matching} and \ref{alg:vocabs-intersection}) and found our methods to be the most effective among all the speculative decoding algorithms they have previously supported---over various use cases and hardware setups. As a result, they made \slem and \tli the default for heterogeneous SD in Hugging Face Transformers.

All our algorithms are lossless, namely, outputs preserve the target distribution, and we provide acceptance rate expectations (\cref{table:expected-acceptance-rates}) and other bounds. Our experiments—covering summarization, programming, and long-context tasks---demonstrate speedups versus autoregressive decoding. By open-sourcing these methods via Hugging Face Transformers, we have already enabled immediate, practical acceleration of LLMs under \emph{heterogeneous} vocabularies---a scenario that is increasingly common in real-world deployments.

\section{Motivating Examples}
\roy{this motivating example could be helpful, but currently it is hard to follow it. I would suggest replacing it with a figure that illustrates the different methods. Alternatively, I would make it part of the main text. Start section 2 with the current preliminaries. Then briefly describe SD. Then talk about different ways of extending it, e.g., alg 2, alg 3, and so on. You can work your way with the toy example through that section, there is no need to give it all before, as it is too hard to follow. Also, in general, try to avoid long paragraphs (e.g., the one starting with ``As long as'' below. They are very hard to follow.}\nadav{Roy, could you please propose edits directly for merging the Motivating Examples into the main text? In this case, it would be most helpful if you could edit the text directly instead of adding a comment because I'm afraid I won't have enough time left to rewrite the parts that are already written as I'm focusing on getting the must-have Empirical Results section done.}
Existing SD methods are designed to work with a single vocabulary, where the drafter samples from the same vocabulary as the target model. As an example, see \cref{alg:sd}, which is the standard SD algorithm proposed by \citet{leviathan2023fast,chen2023accelerating}.

\cref{alg:vocabs-union} offers a simple way to extend these methods to operate in cases where the drafter's vocabulary differs from the target's by virtually extending the vocabularies such that both vocabularies are their union. For example, consider the case of disjoint vocabularies where the target vocabulary is $T = \set{\tok{a}}$ and the draft vocabulary $D = \set{\tok{b}}$. Although all the draft tokens `b' are rejected, the target distribution is preserved because we use the standard verification method of SD, which is lossless as proved in \citet{leviathan2023fast,chen2023accelerating}. Even if one vocabulary is a proper subset of the other, for example, if $T = \set{\tok{a}, \tok{b}}$ and $D = \set{\tok{b}}$, or if $T = \set{\tok{a}}$ and $D = \set{\tok{a}, \tok{b}}$, the target distribution is still preserved thanks to the guarantee of the standard verification method.

\begin{algorithm}[htb]
    \caption{An iteration of speculative decoding for heterogeneous vocabularies with a simple ``union'' strategy}
    \label{alg:vocabs-union}
    \begin{algorithmic}[1]
    \STATE \textbf{Input:} Probability distributions $p$ and $q$ over vocabularies $T$ and $D$, respectively. Drafting lookahead $i \in \mathbb{N}$. An input prompt~$c$.

    \STATE \textbf{Output:} A sequence of tokens from $T$, containing between $1$ and $i+1$ tokens.
    \STATE \textbf{Procedure:}
    \STATE Define probability distributions $p'$ and $q'$ over the vocabulary $T \cup D$ as follows. $p'(x)=p(x)$ if $x \in T$ and $p'(x)=0$ otherwise. $q'(x)=q(x)$ if $x \in D$ and $q(x)=0$ otherwise.
    \STATE \textbf{Run} \cref{alg:sd} with $p', q', i, c$.
    \end{algorithmic}
\end{algorithm}

As long as $p(t) \le q(t)$ for all $t \in T$, where $p$ is the target and $q$ is the drafter, this simple approach of \cref{alg:vocabs-union} is optimal in terms of maximizing the probability of accepting a draft token. 
However, this condition is not satisfied if $\exists~d \in D$ such that $q(d) > 0$ and $d \not\in T$ because we then have $\sum_{t \in T} q(t) < 1$. 
Although the simple approach of \cref{alg:vocabs-union} to extend \cref{alg:sd} preserves the target distribution because the verification method remains unchanged, it might not yield the maximum probability of accepting a target token~(see \cref{thm:sampling-from-target-vocab-increases-the-ar}). 
Below, we present \cref{alg:vocabs-intersection}, which improves \cref{alg:vocabs-union} by adjusting the distribution of the drafter such that the probability of sampling tokens that are not in the target vocabulary is zero. This adjustment is done by normalizing the distribution of the drafter such that the sum of the probabilities of the tokens that are in the target vocabulary is one. For example, if $T = \set{\tok{a}, \tok{b}}$ and $D = \set{\tok{a}, \tok{b}, \tok{c}}$ where $q(\tok{a}) = q(\tok{b}) = q(\tok{c}) = \frac{1}{3}$, we adjust the distribution of the drafter to be $q'(\tok{a}) = q'(\tok{b}) = \frac{1}{2}$ and $q'(\tok{c}) = 0$. This approach increases the probability of accepting a draft token while still preserving the target distribution, as \cref{thm:sampling-from-target-vocab-increases-the-ar} proves.
However, the expected acceptance rate of both \cref{alg:vocabs-union} and \cref{alg:vocabs-intersection} might be suboptimal in some other cases. For example, consider the case where $T = \set{\tok{a}}$ and $D = \set{\tok{a}, \tok{aa}}$ and there is a nonzero probability that the drafter samples the token `aa'. \cref{alg:vocabs-union} and \cref{alg:vocabs-intersection} are suboptimal because they always reject the token `aa'. In this example it is easy to see that both models can only generate concatenations of the token `a', hence we should have accepted the token `aa', unless it is the last token to generate. Below, we also present \cref{alg:exact-matching}, which solves this problem by allowing the drafter to sample tokens that are not in the target vocabulary. \cref{alg:exact-matching} preserves the target distribution because it replaces the standard verification method that guarantees that the output tokens distribute according to the target distribution with exact matching, which guarantees that the output tokens are exactly the target tokens.

\section{Speculative Decoding for Heterogeneous Vocabularies with String-Level Verification}\label{sec:string-level}
    \roy{This section is 4 pages long. I would break this section down into multiple sections. More importantly, I would introduce a roadmap at 
the beginning of each section, which outlines the role and the structure of the section}\nadav{Thanks Roy. Could you please incorporate the edits directly?}\roy{sorry, I don't have time and unfortunately understand the flow of this section well enough to write it}

\paragraph{Notation.}
Vocabularies are finite sets of strings, also called tokens. We say that a string $a$ is \textit{expressible} in a vocabulary $B$ if there exist strings $b_1, b_2, \ldots, b_n \in B$ such that $a=b_1~\oplus~b_2~\oplus~\ldots~\oplus~b_n$, where $\oplus$ denotes string concatenation. We say that a vocabulary $A$ is \textit{expressible} in a vocabulary $B$ if all strings in $A$ are expressible in $B$, and denote this relationship by $A \twoheadrightarrow B^*$, where $B^*$ is the Kleene closure of $B$ under string concatenation. \textit{Tokenizing} a string $s$ with respect to a vocabulary $A$ is the process of partitioning $s$ into a sequence of tokens $a_1, a_2, \ldots, a_n$, where $a_1$ is the longest prefix of $s$ that is a token in $A$, $a_2$ is the longest prefix of $s$ that is in $A$ after removing $a_1$, and so on. The tokenization of $s$ with respect to $A$ is a finite sequence of tokens, denoted by $A(s)$. The $i$-th token of $A(s)$ is denoted as $A(s)_i \in A$.

\subsection{String-Level Exact Match (\slem)}\label{sec:exact-matching}

\cref{alg:exact-matching} is one solution to the problem of heterogeneous vocabularies. It implements a variant of SD with the verification method of exact matching. The key mechanism involves translating tokens bidirectionally between the draft and target vocabularies. Tokens generated by the drafter are first decoded into text and subsequently re-tokenized using the target model's vocabulary. After the target model verifies the generated tokens, the sequence is converted back into the drafter's tokenization format for the next iteration. This process ensures that the target model's distribution is preserved while allowing the drafter to operate within its own vocabulary constraints.

\begin{algorithm}[htbp]
    \caption{(\slem), an iteration of speculative decoding for heterogeneous vocabularies with string-level exact match verification}
    \label{alg:exact-matching}
    \begin{algorithmic}[1]
    \STATE \textbf{Input:} Target model $p$ and drafter model $q$ over vocabularies $T$ and $D$, respectively, where $T \twoheadrightarrow D^*$ and $D^* \twoheadrightarrow T^*$. Drafting lookahead value $i \in \mathbb{N}$. A prompt $c \in T^*$.
    \STATE \textbf{Output:} A non-empty sequence of \textit{accepted} tokens from $T$.
    \STATE \textbf{Procedure:}
    \LineLabel{line:tokenize-prompt} \STATE Tokenize the prompt to the draft vocabulary, $D(c)$. 
    \STATE For $j \leftarrow 1, \ldots, i$:
    \STATE \quad Sample a draft token from the drafter conditioned on the prompt and previous draft tokens, $d_j~\sim~q_{D(c) \oplus d_1 \oplus \ldots \oplus d_{j-1}}$ (where $d_0 := c$).
    \LineLabel{line:tokenize-draft-tokens} \STATE Tokenize the concatenation of the draft tokens, $(t_1, t_2, \ldots, t_m) \leftarrow T(d_1 \oplus \ldots \oplus d_i)$.
    \LineLabel{line:compute-target-logits} \STATE With data parallelism (batching), compute via one target forward pass the $m+1$ logits of the target model conditioned on the prompt and all the draft continuations, $p_{T(c)},~~p_{T(c) \oplus t_1},~~\cdots,~~p_{T(c) \oplus t_1 \oplus \cdots \oplus t_m}$.
    \LineLabel{line:sample-target-tokens} \STATE Sample a token from each logit, $t'_1 \sim p_{T(c)}, t'_2 \sim p_{T(c) \oplus t_1}, \cdots, t'_{m+1} \sim p_{T(c) \oplus t_1 \oplus \ldots \oplus t_m}$.
    \STATE Find the first index where the draft differs from the target, $j := \arg\min_{j \in \set{1, \ldots, m+1}} t'_j \ne t_j$.
    \STATE \textbf{Accept} $t_1, t_2, \ldots, t_{j-1}, t'_j$.
    \end{algorithmic}
\end{algorithm}

\paragraph{Vocabulary Constraints.}  
\Cref{alg:exact-matching} assumes that the target vocabulary \(T\) is expressible in the draft vocabulary \(D\), i.e., \(T \twoheadrightarrow D^*\). Additionally, it assumes $D^* \twoheadrightarrow T^*$, namely, every concatenation of draft tokens, \(d_1 \oplus \ldots \oplus d_i\) for some $i$, must be expressible by concatenations of target tokens \(t_1 \oplus t_2 \oplus \ldots \oplus t_m \in T^*\) for some $m$, i.e., \(T(d_1 \oplus \ldots \oplus d_i) \ne \emptyset\) in \cref{line:tokenize-draft-tokens}. If these conditions do not hold, converting strings from one vocabulary to another becomes undefined, leading to a decreased acceptance rate and rendering the algorithm ineffective.
In practice, assuming \(T \twoheadrightarrow D^*\) and \(D^* \twoheadrightarrow T^*\) is reasonable due to the way vocabularies are typically constructed. The process of constructing a vocabulary often begins by determining its size, i.e., the number of tokens it contains. Informally, vocabularies are designed to maximize the frequency of token appearances in a given corpus, avoid splitting frequently co-occurring tokens, or both.
Known tokenization methods such as BPE \citep{sennrich2016neural}, WordPiece \citep{schuster2012japanese}, Unigram \citep{kudo2018subword}, and SentencePiece \citep{kudo2018sentencepiece} are heuristic and greedy approaches that generate vocabularies containing all the characters of the alphabet in the given corpus when the vocabulary size is greater than the alphabet cardinality, which is often the case (see \cref{table:vocab-sizes} for examples). Typically, the corpus used for constructing a vocabulary comprises extensive texts, such as books or collections of documents. Unless the target and draft tokenizers are constructed using a narrow corpus, it is reasonable to assume \(T \twoheadrightarrow D^*\) and \(D^* \twoheadrightarrow T^*\) because both vocabularies usually include all the characters of the alphabet, hence satisfying even stronger relations of the form $T \twoheadrightarrow D^*$ and $D \twoheadrightarrow T^*$.

\subsection{Non-Injective Tokenizers}\label{sec:non-injective-tokenizers}
A common issue with tokenizers is that they do not always implement an injective function, meaning that for any given string \( s \), it is possible for \( s \ne \decode(\encode(s)) \). This can occur due to so-called ``normalization steps'' or ``pre-tokenization rules'' that discard certain details of the input text. In practice, common examples include tokenizers that treat multiple spaces as a single space, lowercase all characters, or replace accented characters with their standard counterparts, such as $\tok{é}$ being replaced by $\tok{e}$.
In standard autoregressive decoding or speculative decoding, where the target and draft vocabularies are the same, we tokenize the input prompt $c$ into tokens only once at the beginning of the decoding process. Conditioned on the encoded prompt, we sample $N$ tokens $t_1, t_2, \ldots, t_N$ directly from the target (autoregressive decoding) or using a rejection sampling procedure with draft tokens (speculative decoding). Then, we return the string $c \oplus t_1 \oplus t_2 \oplus \ldots \oplus t_N$. Since language models output token IDs, returning this string requires decoding each of the output tokens $t_1, t_2, \ldots, t_N$ from its ID back into text, then, concatenating them with the prompt yields $c \oplus t_1 \oplus t_2 \oplus \ldots \oplus t_N$. Pre-tokenization rules are only applied to the input prompt $c$ once, before applying the model, and therefore they limit the ability of the model to distinguish between different input strings $c \ne c'$ that are equivalent under pre-tokenization rules, namely, $T(c) = T(c')$ given a non-injective tokenizer $T$. This behavior is not necessarily problematic, and has been used in practice for a long time. It is important to note that the pre-tokenization rules are not directly applied on the output tokens $c, t_1, t_2, \ldots, t_N$ that are concatenated to form the final output string. That is, pre-tokenization rules do not alter the tokens $t_1, t_2, \ldots, t_N$ after these tokens are sampled. The final returned string starts with the given prompt $c$ without any modifications and ends with a concatenation of the sampled tokens $t_1 \oplus t_2 \oplus \ldots \oplus t_N$.
Unlike decoding over homogeneous vocabularies---where the target vocabulary $T$ and the draft vocabulary $D$ are the same---in decoding over heterogeneous vocabularies, we may have $T \ne D$, which limits the ability of the target and drafter models to communicate token IDs. \cref{alg:exact-matching} employs plain text as an intermediate representation that is shared between the two different vocabularies. This means that the output tokens $t_1, t_2, \ldots, t_N$ are decoded back into text and then re-tokenized using the draft vocabulary in \cref{line:tokenize-prompt}. This process may apply pre-tokenization rules to the output tokens, which can lead to a discrepancy between the output tokens and the target tokens.
To evaluate whether various tokenizers exhibit injectivity on a specific dataset, we conduct a simple experiment that heuristically tests the consistency of the decoding and encoding, as detailed in \cref{appendix:tokenizer-injectivity}. Our findings indicate that some commonly used tokenizers do not maintain injectivity even when tested heuristically on a specific dataset. When we developed and tested \cref{alg:exact-matching}, we found that the non-injective behavior of tokenizers significantly impacted the algorithm's acceptance rate. To address this issue and broaden the applicability of \cref{alg:exact-matching} to a wider range of tokenizers, we propose the following simple solution.

\paragraph{\cref{alg:exact-matching} Supports Non-Injective Tokenizers.} Given a prompt $c \in T^*$, \cref{alg:exact-matching} starts by tokenizing it into the draft vocabulary, $D(c)$, in \cref{line:tokenize-prompt}. The prompt is also tokenized into the target vocabulary, $T(c)$, to allow the target model to compute the logits in \cref{line:compute-target-logits}. Line \ref{line:tokenize-draft-tokens} tokenizes into the target vocabulary the concatenation of the $i$ draft tokens that are previously sampled from the drafter, namely, computes $T(d_1 \oplus \ldots \oplus d_i)$. Since the output of \cref{alg:exact-matching} is in the target vocabulary, following runs of \cref{alg:exact-matching} can use the output as-is without decoding it back into text. Only in the last run, we need to decode the output of \cref{alg:exact-matching} back into text before returning the final string. Because each tokenizer might apply different normalization rules, there can be a mismatch between what the target model sees and what the drafter model intended to produce. To handle these mismatches, we look for the longest stretch of matched tokens between the tokens we already accepted in the target tokenizer's space, and the newly proposed tokens re-encoded in the target tokenizer's space. Conceptually, this search procedure is a way of finding the largest overlap (or suffix/prefix match) between the old and new sequences. We then only take the suffix of the new tokens that falls beyond that overlap. This effectively aligns the newly added tokens to the correct place in the target-token space. The algorithm can ``look behind'' a small number of tokens to try to realign sequences. By doing so, we mitigate the effect of the mismatch and preserve as much of the previously decoded text as possible. We provided the implementation in the Supplementary Material.

\paragraph{KV Caching.}
Storing the KV cache of models is a common practice that has been shown to be crucial for efficient inference \citep{pope2023efficiently, kwon2023efficient}. In particular, without KV caching, the additional number of operations (e.g., floating-point operations) required for the decoding might grow quadratically with respect to the number of tokens in the context for self-attention transformers. \cref{alg:exact-matching} implements only a single iteration of SD. SD over heterogeneous vocabularies that is based on \cref{alg:exact-matching} therefore may include multiple runs of \cref{alg:exact-matching}. These runs are sequential and autoregressive, namely, the output of each run of \cref{alg:exact-matching} is used as the input for the next run of \cref{alg:exact-matching}. Therefore, implementations of \cref{alg:exact-matching} should store the KV cache from one run of \cref{alg:exact-matching} to the next run. With KV caching, the prompt $c$ needs to be encoded into the target and draft vocabularies only once, during the first run of \cref{alg:exact-matching} (that is, the first iteration, also referred to as ``pre-filling''), to facilitate \cref{line:compute-target-logits} and \cref{line:tokenize-prompt}, respectively.

\subsection{Verification via Rejection Sampling}

The standard verification method of SD guarantees that the output tokens are distributed according to the target distribution, but it does not guarantee that the output tokens are exactly the target tokens, as in exact matching. For example, if the drafter is another instance of the target model $p$, the standard verification method of SD will accept all the draft tokens because, in general, the expected acceptance rate satisfies $\sum_{t \in T} \min\set{p(t), q(t)}$ for any drafter $q$ and vocabulary $T$, according to \citet{leviathan2023fast}. Hence, the expected acceptance rate of a drafter that is an instance of the target model is $\sum_{t \in T} p(t) = 1$. For any drafter different from the target model, $q \ne p$, the expected acceptance rate is strictly lower than one. \cref{thm:exact-matching-vs-speculative-decoding} proves that, in general, for any non-trivial target distribution $p$, the expected acceptance rate of exact matching is strictly less than the expected acceptance rate of SD for homogeneous vocabularies under the same target distribution.

\begin{restatable}{theorem}{ExcatMatchingVsSpeculativeDecoding}
    \label{thm:exact-matching-vs-speculative-decoding}
    Let \( p \) be a non-trivial target probability distribution over a vocabulary \( T \), where there exist \( t_1, t_2 \in T \) such that \( p(t_1) \neq p(t_2) \). Let \( q \) be the drafter probability distribution over the same vocabulary $T$.
    If \( q = p \), namely, the drafter is another instance of the target model, then the expected acceptance rate of the exact matching method $\alpha_{\text{EM}}$ is strictly less than the expected acceptance rate of the standard speculative decoding method $\alpha_{\text{SD}}$. Namely, it holds that $\alpha_{\text{EM}} < \alpha_{\text{SD}}$.
\end{restatable}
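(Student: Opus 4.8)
The plan is to reduce the claim to the elementary inequality $\sum_{t \in T} p(t)^2 < 1$, which holds whenever $p$ is not concentrated on a single token. Recall (as already noted in the text, following \citet{leviathan2023fast}) that the standard speculative-decoding acceptance rate is $\alpha_{\text{SD}} = \sum_{t \in T} \min\{p(t), q(t)\}$; specializing to $q = p$ gives $\alpha_{\text{SD}} = \sum_{t \in T} p(t) = 1$, so it suffices to show $\alpha_{\text{EM}} < 1$.

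Next I would pin down $\alpha_{\text{EM}}$ for \cref{alg:exact-matching} in the homogeneous case. When $D = T$ (and the tokenizer is consistent on $T$) the decode/re-tokenize round trip in \cref{alg:exact-matching} collapses to the identity, so the algorithm is simply: draft $t \sim q$ and accept it iff the target's independent resample $t' \sim p$, conditioned on the same accepted prefix, equals $t$. Hence the drafted token $t$ is accepted with probability $p(t)$ — to be contrasted with SD, which accepts it with probability $\min\{1, p(t)/q(t)\}$ — and the per-position acceptance probability is $\alpha_{\text{EM}} = \sum_{t \in T} q(t)\, p(t)$, which for $q = p$ equals $\sum_{t \in T} p(t)^2$.

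It remains to bound this sum. For every token $0 \le p(t) \le 1$, so $p(t)^2 \le p(t)$ with equality iff $p(t) \in \{0,1\}$; summing, $\sum_t p(t)^2 \le \sum_t p(t) = 1$, with equality iff $p$ is a point mass. The non-triviality hypothesis — two tokens $t_1, t_2$ with $p(t_1) \ne p(t_2)$, which (under the convention that $T$ is the support of $p$) forces some token to carry probability strictly inside $(0,1)$ — excludes the point-mass case, so the inequality is strict: $\alpha_{\text{EM}} = \sum_t p(t)^2 < 1 = \alpha_{\text{SD}}$. If the multi-token version is wanted, it follows because the expected block length $\frac{1 - \alpha^{i+1}}{1 - \alpha}$ is strictly increasing in $\alpha$.

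Since $p^2 \le p$ on $[0,1]$ is immediate, the content here is definitional rather than computational. The main thing to get right is the reduction establishing $\alpha_{\text{EM}} = \sum_t q(t)\, p(t)$ — i.e., that over a shared vocabulary the algorithm's bidirectional translation is the identity and that per-position acceptance factors through independent $p$- and $q$-draws conditioned on the accepted prefix — together with phrasing non-triviality sharply enough to rule out the degenerate distribution $p = \delta_{t^*}$, for which one would instead have $\alpha_{\text{EM}} = \alpha_{\text{SD}} = 1$.
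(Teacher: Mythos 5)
Your proposal is correct and follows essentially the same route as the paper's proof: $\alpha_{\text{SD}} = \sum_{t \in T} \min\{p(t), q(t)\} = 1$ when $q = p$, while exact matching accepts a drafted token only when an independent target sample coincides with it, giving $\alpha_{\text{EM}} = \sum_{t \in T} p(t)^2 < 1$. If anything, you are slightly more careful than the paper, which tacitly assumes some token satisfies $0 < p(t) < 1$; as you observe, the hypothesis $p(t_1) \neq p(t_2)$ only excludes the degenerate case $p = \delta_{t^*}$ (where $\alpha_{\text{EM}} = \alpha_{\text{SD}} = 1$) under the added convention that every token of $T$ has positive probability.
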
\ignorespacesafterend
\deferredproof

Since \cref{alg:exact-matching} implements exact matching verification, its expected acceptance rate is relatively low compared to the standard verification method of SD, which implements a rejection sampling procedure. To increase the acceptance rate of \cref{alg:exact-matching}, we propose \cref{alg:string-sd}, introducing a novel verification method that employs lossless rejection sampling at the string level.
\cref{alg:string-sd} samples draft tokens autoregressively from the drafter until a lookahead condition is satisfied, then tokenizes the concatenation of the draft tokens into the target vocabulary.
It is lossless, as \cref{thm:string-sd-is-lossless} proves, because it uses the same structure as the standard verification method of SD, which is lossless, as proved in \citet{leviathan2023fast,chen2023accelerating}. The primary difference is that the probabilities are for generating a certain string rather than a single token.

\begin{algorithm}[htbp]
    \caption{(\slrs), string-level rejection sampling verification for speculative decoding with heterogeneous vocabularies}
    \label{alg:string-sd}
    \begin{algorithmic}[1]
    \STATE \textbf{Input:} Probability distributions $p$ and $q$ over vocabularies $T$ and $D$, respectively, where $T \twoheadrightarrow D^*$ and $D^* \twoheadrightarrow T^*$. Lookahead indicator function $S_{\indicator}$ from the current state to a boolean value.

    \STATE \textbf{Output:} A token from $T$.
    \STATE \textbf{Procedure:}
    \LineLabel{line:draft-until-lookahead-fn} \STATE Sample $d_1, \ldots, d_i \sim q$ until $i$ satisfies $S_{\indicator}(i)$.
    \LineLabel{line:tokenize-string-to-target}\STATE Tokenize $(t_1, t_2, \ldots, t_m) \leftarrow T(d_1 \oplus \ldots \oplus d_i)$.
    \LineLabel{line:psi} \STATE If $p(t_1) \ge \psi(t_1)$, \textbf{accept} $t_1$.
    \STATE With probability $\frac{p(t_1)}{\psi(t_1)}$, \textbf{accept} $t_1$.
    \STATE \textbf{Reject} $t_1$. Sample $t \sim \frac{p(t) - \min\set{p(t), \psi(t)}}{1 - \sum_{t'} \min\set{p(t'), \psi(t')}}$, \textbf{return} $t$.
    \end{algorithmic}
\end{algorithm}

\begin{restatable}{theorem}{StringSDIsLossless}
\label{thm:string-sd-is-lossless}
For any token in the target vocabulary $t \in T$, \cref{alg:string-sd} outputs the token~$t$ with probability $p(t)$ if we define $\psi(t)~:=~\sum\limits_{d_1, d_2, \ldots, d_{i}~:~t = T(d_1 \oplus \ldots \oplus d_i)_1} \prod\limits_{j \in \set{1, \ldots, i}} q(d_j)$. Namely, \cref{alg:string-sd} is lossless.
\end{restatable}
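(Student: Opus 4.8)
The plan is to observe that the accept/reject steps of \cref{alg:string-sd} are exactly the rejection-sampling step of standard speculative decoding \citep{leviathan2023fast,chen2023accelerating}, with the token-level draft distribution replaced by the distribution over the \emph{first} target token that results from drafting and re-tokenizing in \cref{line:draft-until-lookahead-fn,line:tokenize-string-to-target}. So the first step is to verify that $\psi$, as defined in the statement, is precisely that induced distribution, and in particular that it is a genuine probability distribution on $T$.

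To this end, I would let $Q(t)$ denote the probability that $T(d_1 \oplus \ldots \oplus d_i)_1 = t$, where the randomness is over the draft sequence $d_1, \ldots, d_i$ produced in \cref{line:draft-until-lookahead-fn} (with $i$ the stopping index determined by $S_{\indicator}$). Partitioning this event according to the realized sequence $(d_1, \ldots, d_i)$ gives
\begin{equation*}
Q(t) \;=\; \sum_{d_1, \ldots, d_i \,:\; t \,=\, T(d_1 \oplus \ldots \oplus d_i)_1} \; \prod_{j \in \set{1, \ldots, i}} q(d_j) \;=\; \psi(t),
\end{equation*}
which is the definition in the statement. Two facts are needed here: (i) the constraint $D^* \twoheadrightarrow T^*$ ensures that $T(d_1 \oplus \ldots \oplus d_i)$ is always a well-defined, non-empty tokenization, so the first target token $t_1$ is always defined; and (ii) provided the stopping rule $S_{\indicator}$ terminates almost surely, the terminal draft sequences form a prefix-free set carrying all the probability, so summing over $t \in T$ yields $\sum_{t \in T} \psi(t) = 1$.

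Once $\psi$ is known to be a distribution on $T$, the remainder is the standard speculative-decoding calculation with $q$ replaced by $\psi$. The two acceptance lines together accept the realized token $t_1$ with probability $\min\set{1,\, p(t_1)/\psi(t_1)}$, so the probability of the event ``$t_1 = t$ and $t$ is accepted'' equals $\psi(t)\cdot\min\set{1,\, p(t)/\psi(t)} = \min\set{p(t), \psi(t)}$. The probability of a rejection is $\sum_{t' \in T}\bigl(\psi(t') - \min\set{p(t'), \psi(t')}\bigr) = 1 - \sum_{t'}\min\set{p(t'), \psi(t')}$, which is exactly the normalizing denominator of the residual distribution in the last line of \cref{alg:string-sd} (using $\sum_{t'} p(t') = \sum_{t'} \psi(t') = 1$). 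Hence the contribution to outputting $t$ through the rejection branch is $p(t) - \min\set{p(t), \psi(t)}$, and adding the two disjoint cases gives $\min\set{p(t), \psi(t)} + \bigl(p(t) - \min\set{p(t), \psi(t)}\bigr) = p(t)$, as claimed.

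The main obstacle is not the algebra---which is a verbatim replay of \citet{leviathan2023fast}---but making the semantics of \cref{line:draft-until-lookahead-fn} precise enough to justify $\sum_{t \in T}\psi(t) = 1$: one must argue that $S_{\indicator}$ defines an almost-surely finite stopping time whose terminal draft strings are prefix-free, and one must dispose of the degenerate cases cleanly. When $\psi(t) = 0$, the token $t$ is produced only through the residual distribution (and the ratio $p(t_1)/\psi(t_1)$ is never evaluated on such a $t_1$, since that event has probability zero); and when $\psi = p$, the residual distribution is nominally $0/0$ but its branch is reached with probability zero, so the output is simply $t_1 \sim \psi = p$. With these boundary cases handled, $\Pr[\text{output} = t] = p(t)$ for every $t \in T$, i.e., \cref{alg:string-sd} is lossless.
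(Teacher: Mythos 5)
Your proposal is correct and follows essentially the same route as the paper's proof: interpret $\psi$ as the drafter-induced distribution over the first re-tokenized target token and replay the standard rejection-sampling calculation of \citet{leviathan2023fast}, giving acceptance mass $\min\set{p(t),\psi(t)}$ plus the residual term to recover $p(t)$. Your write-up is in fact more careful than the paper's, since you explicitly justify $\sum_{t\in T}\psi(t)=1$ (via termination of $S_{\indicator}$ and prefix-freeness of the terminal draft sequences) and handle the degenerate cases $\psi(t)=0$ and $\psi=p$, which the paper's argument leaves implicit.
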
\ignorespacesafterend
\deferredproof

\paragraph{Lookahead.} The lookahead controls a tradeoff between the probability of accepting a token and the number of drafter forwards, since every sampling of a draft token requires computing a forward pass of the drafter. The lookahead indicator function $S_{\indicator}$ determines whether the algorithm should stop sampling draft tokens. Naively, we can set $S_{\indicator}(i) := \indicator[i > n]$ for some threshold $n \in \mathbb{N}$, and stop sampling draft tokens after $n$ draft tokens have been sampled in \cref{line:draft-until-lookahead-fn} of \cref{alg:string-sd}. On one hand, increasing the threshold $n$ necessarily increases the number of drafter forwards that \cref{alg:string-sd} requires. On the other hand, selecting a larger value of $n$ may increase the probability that \cref{alg:string-sd} accepts a token because it may increase the number of feasible values of $t_1$ in \cref{line:tokenize-string-to-target}. Small values of $n$ may lead to scenarios where some target tokens are never accepted. For example, if the target vocabulary $T$ includes a token $t$ with ten characters, and the longest token in the draft vocabulary $D$ is four characters, selecting $n<3$ will never accept $t$. However, since increasing $n$ also increases the number of drafter forward passes, it is important to select a value of $n$ that optimizes our objective function, which is, most commonly, maximizing the throughput of the inference or minimizing its latency. Target tokens $t \in T$ may correspond to more than one sequence of draft tokens $d_1, \ldots, d_i \in D$ for which the tokenized concatenation $T(d_1 \oplus \ldots \oplus d_i)$ starts with $t$, namely, $T(d_1 \oplus \ldots \oplus d_i)_1 = t$. These cases are common in practice, especially for a target vocabulary $T$ that is larger and includes longer tokens than the draft vocabulary $D$. For example, consider a draft vocabulary $D = \set{\tok{hello\_}, \tok{world}, \tok{wo}, \tok{rld}}$ and a target vocabulary $T = D \cup \set{\tok{hello\_world}}$. The target token $\tok{hello\_world}$ is the first token in the tokenized concatenation of two different sequences of draft tokens: $\tok{hello\_world} = T(\tok{hello\_} \oplus \tok{world})_1 = T(\tok{hello\_} \oplus \tok{wo} \oplus \tok{rld})_1$. In fact, there are infinitely many sequences of draft tokens that start with $\tok{hello\_world}$. Since Algorithm \ref{alg:string-sd} uses only the first target token $T(d_1 \oplus \ldots \oplus d_i)_1$, it is redundant to sample more than three draft tokens in this example. However, if the first two draft tokens are `hello\_' and `world', there is no need to sample the third token since the first target token has already been determined. To capture this behavior and avoid unnecessary drafter forwards during inference time, we can calculate the maximum lookahead $\nmax$ at preprocessing time, by calculating the maximum number of draft tokens that need to be sampled to determine the first target token ($\nmax=3$ in the example above). Defining the lookahead indicator function to be $S_{\indicator}(i) = \indicator[i > \nmax]$ is a simple heuristic ensuring that the algorithm stops sampling draft tokens after the first target token has been determined. However, this heuristic might still sample more draft tokens than necessary, as we saw in the example, where the first target token is determined after the two draft tokens, $\tok{hello\_}$ and $\tok{world}$ , have been sampled. To avoid computing unnecessary drafter forward passes, we can define the lookahead indicator function $S_{\indicator}$ to combine a maximum threshold $n \le \nmax$ and a stopping condition of whether the first target token has been determined. Namely, $S_{\indicator}(i)$ is true if $i > n$ or $\Pr\biggl[ \begin{array}{l}
    T(d_1 \oplus \ldots \oplus d_i)_1 \ne \\ 
    T(d_1 \oplus \ldots \oplus d_i \oplus d_{i+1} \oplus \ldots \oplus d_n)_1
\end{array} \biggr] ~=~ 0$, and false otherwise. \cref{alg:string-sd} and \cref{thm:string-sd-is-lossless} both hold for this more general lookahead indicator function. In cases where the additional drafter forward passes are expensive or longer tokens are less likely to be accepted, setting the threshold $n$ to a value that is strictly less than $\nmax$ can be beneficial.
More sophisticated lookahead indicator functions control the lookahead based on additional information about the current state, as has seen in other recent works. For example, \citet{mamou2024accelerating} trained a small neural network to estimate the likelihood of the next draft token being accepted and used this information to decide whether to sample the next draft token or stop drafting. Their experiments showed that even a simple controller that attends to the drafter's logits is highly effective, and the controller generalizes well across different datasets and tasks. Following their success in both increasing the throughput and reducing the latency of the inference, Hugging Face's Transformers, the commonly used open-source library for training and deploying LLMs, has recently incorporated such a controller into their default inference pipeline. While implementing the lookahead indicator function $S_{\indicator}$ as such a controller seems promising, it might be computationally expensive to calculate $\psi(t)$ for longer lookahead values, as \cref{sec:efficient-calculation} shows.

\paragraph{Block Verification is Non-Trivial.}
In \cref{alg:string-sd}, the vocabularies $T$ and $D$ are related only by $T \twoheadrightarrow D^*$ and $D^* \twoheadrightarrow T^*$ rather than by stricter relationships like bijection or $D \subseteq T$. After \cref{alg:string-sd} removes the prefix $t_1$ from the concatenation $d_1 \oplus \ldots \oplus d_i$, the remaining string is $t_2 \oplus \ldots \oplus t_m$, and its tokenization back into the draft vocabulary $D$ might differ from $(d_{j>1}, \ldots, d_i)$.
For example, consider a simple case where $D \not\subseteq T$, such that $T = \set{\tok{a}, \tok{b}}$ and $D = \set{\tok{a}, \tok{b}, \tok{aa}}$. Let $d_1 = \tok{aa}$ and assume that $i=1$, meaning that only one draft token is sampled in \cref{line:draft-until-lookahead-fn}. We then have $T(d_1) = (t_1, t_2) = (\tok{a}, \tok{a})$. Therefore, the remainder of the drafted string $\tok{aa}$ after removing $t_1=\tok{a}$ is the token $t_2 = \tok{a}$, which was not sampled.
Such scenarios can arise only when shifting from settings where $D=T$ to settings where $D \ne T$. Applying \cref{alg:string-sd} to the string that remains after removing the candidate token $t_1$ is, therefore, more challenging. This issue makes it non-trivial to generalize the block verification mechanism of \citet{sun2024optimal} to the case of heterogeneous vocabularies, despite its proven advantage in homogeneous setups.

\subsection{Efficient Calculation of $\psi(t)$}\label{sec:efficient-calculation}
Calculating $\psi(t)$ in \cref{line:psi} of \cref{alg:string-sd} requires summing over all the probabilities of sampling sequences of draft tokens $d_1, \ldots, d_i$ such that their concatenation $d_1 \oplus \ldots \oplus d_i$ starts with the target token $t$, namely, $T(d_1 \oplus \ldots \oplus d_i)_1 = t$. For general vocabularies, the number of such sequences $d_1, \ldots, d_i$ grows rapidly with the length of $t$. For example, consider a \textit{complete} vocabulary $D_n$ that contains all possible strings of length $n$ over a fixed alphabet $\Sigma$. A simple case is the alphabet $\Sigma = \set{\tok{a}, \tok{b}}$, where $D_1 = \set{\tok{a}, \tok{b}}$, $D_2 = D_1 \cup \set{\tok{aa}, \tok{ab}, \tok{ba}, \tok{bb}}$, $D_3 = D_2 \cup \set{\tok{aaa}, \tok{aab}, \tok{aba}, \tok{baa}, \tok{abb}, \tok{bab}, \tok{bba}, \tok{bbb}}$. For such a vocabulary $D_n$, the number of terms in the sum of $\psi(t)$ from \cref{thm:string-sd-is-lossless} for a target token $t$ of length $m \leq n$ is $2^{m-1}$, as \cref{lemma:complexity-of-psi} proves. Here, the length of token $t$ is defined to be the maximum number of tokens whose concatenation equals to $t$. In the example above, $\tok{aaa}$ has length three because it is the concatenation of three $\tok{a}$ tokens, while $\tok{aa}$ is of length two because it is the concatenation of two $\tok{a}$ tokens.

\begin{restatable}{lemma}{ComplexityOfPsi}\label{lemma:complexity-of-psi}
    For a target token \( t \) of length \( m \leq n \) in a complete vocabulary \( D_n \) that contains all possible strings of length up to \( n \) over a fixed alphabet \( \Sigma \), the number of distinct sequences of draft tokens \( d_1, \ldots, d_i \) such that their concatenation \( d_1 \oplus \ldots \oplus d_i \) starts with \( t \), namely, \( T(d_1 \oplus \ldots \oplus d_i)_1 = t \), is \( 2^{m-1} \).
\end{restatable}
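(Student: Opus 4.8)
The plan is to reduce the count to a classical combinatorial identity by setting up a bijection between the valid draft-token sequences for $t$ and the compositions of the integer $m$. Write $t = \sigma_1 \sigma_2 \cdots \sigma_m$ with $\sigma_k \in \Sigma$, and recall that a composition of $m$ is an ordered tuple $(k_1, \ldots, k_i)$ of positive integers with $k_1 + \cdots + k_i = m$; equivalently, a composition is a subset $C \subseteq \set{1, \ldots, m-1}$ of ``cut positions'' (place a cut after $\sigma_j$ exactly when $j \in C$). To a composition $(k_1, \ldots, k_i)$ I would associate the sequence $(d_1, \ldots, d_i)$ in which $d_j$ is the contiguous block $\sigma_{1 + k_1 + \cdots + k_{j-1}} \cdots \sigma_{k_1 + \cdots + k_j}$ of $t$.

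First I would verify that this map lands in the set being counted. Since $D_n$ is complete and $m \le n$, every block $d_j$ has length between $1$ and $m \le n$ and hence is a legitimate token of $D_n$; and by construction $d_1 \oplus \cdots \oplus d_i = t$. The greedy, longest-prefix tokenizer applied to the string $t$ returns $t$ itself as its first token (it is the longest prefix of $t$, and it is a token), so $T(d_1 \oplus \cdots \oplus d_i)_1 = t$. Distinct compositions give distinct sequences, because the composition is read back off $(d_1, \ldots, d_i)$ as the tuple of block lengths $(|d_1|, \ldots, |d_i|)$.

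Next I would prove surjectivity: every sequence $(d_1, \ldots, d_i)$ counted by the lemma must satisfy $d_1 \oplus \cdots \oplus d_i = t$, and then its length tuple is a composition of $m$ that the first map sends back to it. Here is where completeness of $D_n$ and the hypothesis $m \le n$ are used: if the drafted string $s = d_1 \oplus \cdots \oplus d_i$ properly extended $t$, then---because all strings of length up to $n \ge m$ lie in $D_n$, and, under the setting of the lemma, are available to the tokenizer---the longest-prefix rule would select a first token strictly longer than $t$, contradicting $T(s)_1 = t$. This is precisely the phenomenon that makes the count infinite for a general (incomplete) vocabulary, as the $\tok{hello\_world}$ example illustrates. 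Combining the two directions yields a bijection between the counted sequences and the compositions of $m$.

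Finally I would count the compositions of $m$: each of the $m-1$ internal positions of $t$ is independently either a cut or not, so there are $2^{m-1}$ compositions, hence $2^{m-1}$ valid sequences. The delicate point, and the step I expect to require the most care, is the surjectivity argument---formalizing that, for a complete draft vocabulary with $m \le n$, ``the first re-tokenized target token equals $t$'' is equivalent to ``the drafted string is exactly $t$,'' which is what rules out the infinitely many spurious sequences present in the general case.
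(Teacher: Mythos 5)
Your core counting argument is the same as the paper's: identify decompositions of $t$ with compositions of $m$ (equivalently, subsets of the $m-1$ internal cut positions), use completeness of $D_n$ together with $m \le n$ to guarantee every block is a draft token, and conclude there are $2^{m-1}$ of them. The paper's proof is exactly this and stops there. The difference is your added surjectivity step, and that is where there is a genuine gap. You argue that any counted sequence must satisfy $d_1 \oplus \cdots \oplus d_i = t$ because otherwise the longest-prefix rule would pick a first token strictly longer than $t$. But the re-tokenization $T(\cdot)$ is performed with the \emph{target} vocabulary, not with $D_n$: completeness of the draft vocabulary says nothing about which prefixes of the drafted string are target tokens, and the lemma does not assume $T$ is complete. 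Without such an assumption, a drafted string that properly extends $t$ can perfectly well have first target token $t$ --- this is precisely the phenomenon the paper itself highlights (there are infinitely many draft sequences whose tokenized concatenation begins with $\tok{hello\_world}$), so the equivalence ``first re-tokenized token equals $t$ iff the drafted string equals $t$'' is not available from the stated hypotheses. Even if you strengthen the setting so that the target tokenizer also contains all strings of length up to $n$, the boundary case $m=n$ (allowed by the lemma) still breaks it: append arbitrary draft tokens after $t$ and the longest prefix of the drafted string lying in $T$ is still the length-$n$ prefix $t$, so infinitely many sequences would qualify.

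The upshot is that the finite count $2^{m-1}$ can only refer to sequences whose concatenation \emph{equals} $t$, i.e., the decompositions of $t$ into draft tokens; that is what the paper's proof counts and what the appendix experiment measures, and the ``starts with'' phrasing of the statement should be read that way (or as restricted to the minimal sequences that the lookahead stopping rule actually produces). Under that reading, your bijection with compositions is correct and coincides with the paper's argument; the surjectivity claim as justified, however, does not follow and should be dropped or replaced by an explicit statement of which sequences are being counted.
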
\ignorespacesafterend
\deferredproof

\cref{appendix:experiment-calculating-psi} provides details of an experiment conducted to examine the complexity of calculating $\psi(t)$ given the vocabulary of a real-world, off-the-shelf drafter (\texttt{Qwen2-7B-Instruct} from \citealp{yang2024qwen2}). The results indicate that the number of terms in the sum of $\psi(t)$ grows exponentially with the length of the target token $t$, as predicted by \cref{lemma:complexity-of-psi}.
Although \cref{alg:string-sd} is lossless (\cref{thm:string-sd-is-lossless}) and its acceptance rates are likely to be higher than those of \cref{alg:exact-matching} (\cref{thm:exact-matching-vs-speculative-decoding}), calculating $\psi(t)$ during runtime might be too computationally expensive for practical use cases---especially if the drafter's vocabulary includes long tokens, as shown in the proof of \cref{lemma:complexity-of-psi} and supported by the experiment in \cref{appendix:experiment-calculating-psi}. Beyond its theoretical guarantees, \cref{alg:string-sd} might be suitable in practice only for specific drafters with small vocabularies, where the number of terms in the sum of $\psi(t)$ is manageable. For example, modern models like the recent MambaByte \citep{wang2024mambabyte} could potentially be suitable drafters for \cref{alg:string-sd}. However, the applicability of \cref{alg:string-sd} to a wider range of drafters with larger vocabularies is an open question that requires further research, and we propose it as future work.

\section{Speculative Decoding for Heterogeneous Vocabularies with Token-Level Verification}\label{sec:token-level}

This section introduces additional algorithms that extend the standard SD framework to operate over heterogeneous vocabularies, namely, where the drafter's vocabulary differs from the target's. Unlike \cref{sec:string-level}, the algorithms in this section do not use strings as an intermediate, shared representation. Instead, they operate at the token level, as in standard SD algorithms (for example, see \cref{alg:sd}). The primary idea is to project the drafter's probability distribution over its vocabulary onto the intersection between the vocabularies of the draft and the target models. In doing so, \cref{alg:vocabs-intersection} adjusts the drafter to sample only tokens that are in the intersection between the two vocabularies while keeping the target model unchanged.

\begin{algorithm}[htbp]
    \caption{(Token-Level Intersection, \tli), an iteration of speculative decoding for heterogeneous vocabularies with token-level rejection sampling verification}
    \label{alg:vocabs-intersection}
    \begin{algorithmic}[1]
    \STATE \textbf{Input:} Probability distributions $p$ and $q$ over vocabularies $T$ and $D$, respectively. Drafting lookahead $i \in \mathbb{N}$.

    \STATE \textbf{Output:} A sequence of tokens from $T$, containing between $1$ and $i+1$ tokens.
    \STATE \textbf{Procedure:}
    \STATE Define a probability distribution $q'$ over the vocabulary $T \cap D$ such that $q'(x)=\frac{q(x)}{\sum_{t \in T} q(t)}$ if $x \in T$ and $q'(x) = 0$ otherwise.
    \STATE \textbf{Run} \cref{alg:sd} with $p, q', i, c$.
    \end{algorithmic}
\end{algorithm}

\cref{thm:sampling-from-target-vocab-increases-the-ar} proves that the acceptance rate of \cref{alg:vocabs-intersection} is greater than or equal to the acceptance rate of the simple solution that \cref{alg:vocabs-union} implements.

\begin{restatable}{theorem}{SamplingFromTargetVocabIncreasesTheAR}
    \label{thm:sampling-from-target-vocab-increases-the-ar}
    Let $p$ and $q$ be target and drafter probability distributions over vocabularies $T$ and $D$, respectively. Define $p', q_1, q_2$ to be probability distributions over $T \cup D$ as follows. 
    $p'(x) = p(x)$ if $x \in T$ and $p'(x) = 0$ otherwise. $q_1(x) = q(x)$ if $x \in D$ and $q_1(x) = 0$ otherwise. $q_2(x) = \frac{q(x)}{\sum_{t \in T} q(t)}$ if $x \in T$ and $q_2(x) = 0$ otherwise. Given the target $p'$, we define $\alpha_1$ and $\alpha_2$ to be the probability of accepting a token $x \sim q_1$ and $x \sim q_2$, respectively, by the rejection sampling algorithm of speculative decoding from \citet{leviathan2023fast, chen2023accelerating}. Then, $\alpha_1 \le \alpha_2$, and the output tokens distribute according to $p$.
\end{restatable}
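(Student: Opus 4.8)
The plan is to reduce the whole statement to the closed-form expression for the one-step acceptance probability of the standard speculative-decoding verifier. Recall from \citet{leviathan2023fast} that, for a target distribution $r$ and a drafter $g$ over a common vocabulary $V$, a draft token $x \sim g$ is accepted by \cref{alg:sd} with probability $\sum_{x \in V}\min\set{r(x),g(x)}$, and that the accepted token is distributed according to $r$. I would apply this with $V = T \cup D$ and $r = p'$, taking $g = q_1$ and then $g = q_2$; this turns the claimed inequality $\alpha_1 \le \alpha_2$ into a termwise comparison of two sums of $\min$'s.

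First I would simplify both acceptance probabilities. Since $p'$ is supported on $T$ and $q_1$ is supported on $D$, the cross terms vanish and $\alpha_1 = \sum_{x \in T \cap D}\min\set{p(x),q(x)}$. Writing $Z := \sum_{x \in T} q(x) = \sum_{x \in T \cap D} q(x)$, the distribution $q_2$ is supported on $T \cap D$ with $q_2(x) = q(x)/Z$, so likewise $\alpha_2 = \sum_{x \in T \cap D}\min\set{p(x),\,q(x)/Z}$. The key observation is then that $Z \le 1$, because $q$ is a probability distribution over $D$ and we only sum over the subset $T \cap D$ (with strict inequality precisely when some $d \in D \setminus T$ has $q(d) > 0$, which is the case flagged in the main text). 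Hence $q(x)/Z \ge q(x)$ for every $x \in T \cap D$, so $\min\set{p(x),q(x)/Z} \ge \min\set{p(x),q(x)}$ termwise, and summing over $x \in T \cap D$ gives $\alpha_2 \ge \alpha_1$. For the distributional claim I would note that $p'$, viewed as a distribution over $T \cup D$, is literally $p$ (it places zero mass outside $T$), and $q_2$ is a legitimate probability distribution over $T \cup D$, so the losslessness guarantee of the standard verifier \citep{leviathan2023fast, chen2023accelerating}, invoked verbatim, yields that the output of \cref{alg:sd} run with $(p', q_2)$ is distributed as $p$.

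I expect the only real obstacles to be bookkeeping rather than mathematics: handling the degenerate case $Z = 0$ (where $q_2$ is undefined, but $\alpha_1 = 0$ as well, so one either excludes it by assumption or reads both rates as $0$), and making explicit the passage from the single-token acceptance probability to the full length-$i$ iteration of \cref{alg:sd} — this is immediate, since the per-token bound $\alpha_2 \ge \alpha_1$ holds uniformly over every conditioning context, and the expected number of accepted tokens in a block is a monotone function of these per-step acceptance probabilities, so it inherits the same inequality.
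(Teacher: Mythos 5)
Your proposal is correct and follows essentially the same route as the paper's proof: both invoke the acceptance-rate formula $\sum_x \min\set{p'(x), \cdot}$ from \citet{leviathan2023fast}, reduce the inequality to a termwise comparison using $\sum_{x \in T} q(x) \le 1$, and obtain the distributional claim from the losslessness of the standard verifier together with $p' = p$ on $T$. Your extra remarks on the degenerate case $\sum_{x\in T}q(x)=0$ and the per-step-to-block extension are careful additions the paper leaves implicit, but they do not change the argument.
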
\ignorespacesafterend
\deferredproof

Although the acceptance rate of \cref{alg:vocabs-intersection} is at least as high as the acceptance rate of \cref{alg:vocabs-union} (\cref{thm:sampling-from-target-vocab-increases-the-ar}), it still depends on the intersection between the two vocabularies. For example, if the intersection is empty, the acceptance rate of both algorithms is zero.
This dependency on acceptance rate is not new or unique. Instead, it is a known limitation of SD algorithms. \citet{timor2025distributed} analyzed the expected speedups of SD for any drafter size and acceptance rate and studied the slowdowns that standard SD algorithms cause given sufficiently low acceptance rates.
In practice, the intersection between the draft and target vocabularies is often non-empty because of how tokenizers are constructed. The intuition is based on commonly used tokenization methods, as mentioned in \cref{sec:exact-matching}. Our experiments with real-world off-the-shelf models support the assumption that the intersection between the vocabularies is non-empty. Tokens in the intersection have a non-zero probability of being sampled by both models and, therefore, the intersection supports a non-zero expected acceptance rate, as shown by \citet{leviathan2023fast}.

\section{Empirical Results}\label{sec:empirical-results}

Our empirical results have had an impact on the open-source ecosystem, with \cref{alg:exact-matching} and \cref{alg:vocabs-intersection} successfully integrated into Hugging Face Transformers \citep{wolf2020transformers}—the most widely adopted library in the AI field, boasting over 145,000 GitHub stars, more than 378,000 repositories, and 6,000 open-source packages that depend on it. Thanks to their versatility and broad applicability, \cref{alg:exact-matching} and \cref{alg:vocabs-intersection} had become the default inference pipeline behavior (in October 2024 and February 2025, respectively), enabling efficient speculative decoding (SD) for heterogeneous vocabularies across diverse applications.
The open-source community has quickly embraced our approach to heterogeneous SD, unlocking any model to serve as a drafter, driving widespread adoption and enabling potential further enhancements by engineers and researchers. Its seamless integration into existing workflows has empowered practitioners to achieve substantial improvements in inference efficiency with minimal effort. This broad adoption underscores the practical utility and robustness of our approach in real-world scenarios.
The rapid uptake of our algorithms demonstrates their effectiveness across a diverse range of model pairs, tasks, and hardware setups. The following section presents only a selection of examples.

We evaluate \cref{alg:exact-matching} (\slem) and \cref{alg:vocabs-intersection} (\tli) over widely used models, tasks, and hardware setups, including DeepSeek \citep{deepseekai2025deepseek}, Phi (\citealp{abdin2024phi4technicalreport, abdin2024phi3technicalreporthighly}), Mixtral \citep{jiang2024mixtralexperts}, Qwen2.5 \citep{qwen2025qwen25technicalreport}, Vicuna \citep{vicuna_2023}, Llama \citep{dubey2024llama}, CodeLlama \citep{rozière2024codellamaopenfoundation}, Starcoder \citep{li2023starcoder}, and Gemma2 \citep{gemmateam2024gemma2improvingopen}. \cref{table:benchmark_short_temp_0} benchmarks \slem and autoregressive decoding (AR) where both employ a temperature of zero. \cref{table:benchmark_short_temp_1} benchmarks \tli and AR where both employ a temperature of one. The results demonstrate throughput accelerations over AR of up to $2.8\times$ with \slem and $1.7\times$ with \tli. Note that the target models in Tables~\ref{table:benchmark_short_temp_0} and \ref{table:benchmark_short_temp_1} do not have homogeneous drafters that are available off-the-shelf and therefore we cannot accelerate them using standard SD.
Tables~\ref{table:benchmark_full_temp_0} and \ref{table:benchmark_full_temp_1} in \cref{appendix:speedups} add results for additional models, including those with homogeneous drafters (e.g., Gemma2). For exact implementation details, we refer the reader to \cref{appendix:speedups}.

{\tiny
\onecolumn
\begin{longtable}{lllllllll}
    \caption{Benchmark comparing \cref{alg:exact-matching} (\slem) and autoregressive decoding (AR) for widely used models, tasks, and hardware setups. The results demonstrate that \slem increases throughput by up to $2.8\times$ over AR.
    Note that the target models below do not have homogeneous drafters that are available off-the-shelf. For some target models, their in-family drafters are heterogeneous, as their vocabularies differ. Examples include the target model \texttt{phi-4} with the drafter \texttt{Phi-3.5-mini-instruct}, and the \texttt{DeepSeek-R1-Distill-Qwen} model family.} \label{table:benchmark_short_temp_0} \\
    \toprule
        &  &  &  &  & TTFT (ms) & TPOT (ms) & Tok/s & Speedup \\
    Target & Dataset & Hardware & Method & Drafter &  &  &  &  \\
    \midrule
    \endfirsthead
    \toprule
        &  &  &  &  & TTFT (ms) & TPOT (ms) & T/s & Speedup \\
    Target & Dataset & Hardware & Method & Drafter &  &  &  &  \\
    \midrule
    \endhead
    \midrule
    \multicolumn{9}{r}{Continued on next page} \\
    \midrule
    \endfoot
    \bottomrule
    \endlastfoot
    \multirow[t]{9}{*}{Mixtral-8x22B-Instruct-v0.1} & \multirow[t]{3}{*}{cnn\_dailymail} & \multirow[t]{3}{*}{4 * H100 NVL} & AR & No Drafter (Autoregressive) & \textbf{266.8} & 127.9 & 7.8 & 1.0 \\
    \cline{4-9}
        &  &  & \multirow[t]{2}{*}{SLEM} & Qwen2.5-0.5B-Instruct & 321.2 & 68.3 & 13.3 & 1.71 \\
        &  &  &  & vicuna-68m & 302.4 & \textbf{57.3} & \textbf{16.4} & \textbf{2.1} \\
    \cline{2-9} \cline{3-9} \cline{4-9}
        & \multirow[t]{3}{*}{scrolls} & \multirow[t]{3}{*}{4 * H100 NVL} & AR & No Drafter (Autoregressive) & \textbf{1331.9} & 163.0 & 6.0 & 1.0 \\
    \cline{4-9}
        &  &  & \multirow[t]{2}{*}{SLEM} & Qwen2.5-0.5B-Instruct & 1414.2 & \textbf{81.0} & \textbf{10.3} & \textbf{1.71} \\
        &  &  &  & vicuna-68m & 1344.5 & 132.5 & 7.4 & 1.24 \\
    \cline{2-9} \cline{3-9} \cline{4-9}
        & \multirow[t]{3}{*}{openai\_humaneval} & \multirow[t]{3}{*}{4 * H100 NVL} & AR & No Drafter (Autoregressive) & \textbf{217.5} & 127.9 & 7.8 & 1.0 \\
    \cline{4-9}
        &  &  & \multirow[t]{2}{*}{SLEM} & Qwen2.5-0.5B-Instruct & 484.4 & \textbf{70.2} & 12.0 & 1.53 \\
        &  &  &  & vicuna-68m & 231.5 & 73.3 & \textbf{12.6} & \textbf{1.61} \\
    \cline{1-9} \cline{2-9} \cline{3-9} \cline{4-9}
    \multirow[t]{10}{*}{DeepSeek-R1-Distill-Qwen-14B} & \multirow[t]{3}{*}{scrolls} & \multirow[t]{3}{*}{1 * RTX 6000} & AR & No Drafter (Autoregressive) & \textbf{1481.0} & 87.5 & 10.9 & 1.0 \\
    \cline{4-9}
        &  &  & SLEM & DeepSeek-R1-Distill-Qwen-1.5B & 1665.4 & 59.1 & 16.0 & 1.48 \\
        &  &  &  & vicuna-68m & 1566.8 & \textbf{56.0} & \textbf{17.3} & \textbf{1.59} \\
    \cline{2-9} \cline{3-9} \cline{4-9}
        & \multirow[t]{3}{*}{cnn\_dailymail} & \multirow[t]{3}{*}{1 * RTX 6000} & AR & No Drafter (Autoregressive) & \textbf{176.8} & 51.7 & 19.2 & 1.0 \\
    \cline{4-9}
        &  &  & SLEM & DeepSeek-R1-Distill-Qwen-1.5B & 287.5 & 69.9 & 14.1 & 0.73 \\
        &  &  &  & vicuna-68m & 243.0 & \textbf{36.2} & \textbf{27.4} & \textbf{1.43} \\
    \cline{2-9} \cline{3-9} \cline{4-9}
        & \multirow[t]{4}{*}{openai\_humaneval} & \multirow[t]{4}{*}{1 * RTX 6000} & AR & No Drafter (Autoregressive) & \textbf{91.3} & 50.3 & 19.8 & 1.0 \\
    \cline{4-9}
        &  &  & \multirow[t]{2}{*}{SLEM} & tiny\_starcoder\_py & 113.4 & \textbf{43.8} & \textbf{22.4} & \textbf{1.14} \\
        &  &  &  & CodeLlama-7b-Instruct-hf & 256.6 & 77.5 & 12.4 & 0.63 \\
        &  &  &  & DeepSeek-R1-Distill-Qwen-1.5B & 292.5 & 70.9 & 13.6 & 0.69 \\
    \cline{1-9} \cline{2-9} \cline{3-9} \cline{4-9}
    \multirow[t]{10}{*}{DeepSeek-R1-Distill-Qwen-32B} & \multirow[t]{3}{*}{cnn\_dailymail} & \multirow[t]{3}{*}{1 * H100 NVL} & AR & No Drafter (Autoregressive) & \textbf{121.2} & 48.0 & 20.8 & 1.0 \\
    \cline{4-9}
        &  &  & SLEM & DeepSeek-R1-Distill-Qwen-1.5B & 167.1 & 51.3 & 18.9 & 0.91 \\
        &  &  &  & vicuna-68m & 148.1 & \textbf{32.5} & \textbf{30.6} & \textbf{1.47} \\
    \cline{2-9} \cline{3-9} \cline{4-9}
        & \multirow[t]{4}{*}{openai\_humaneval} & \multirow[t]{4}{*}{1 * H100 NVL} & AR & No Drafter (Autoregressive) & \textbf{72.0} & 48.3 & 20.7 & 1.0 \\
    \cline{4-9}
        &  &  & \multirow[t]{2}{*}{SLEM} & tiny\_starcoder\_py & 80.1 & \textbf{34.2} & \textbf{28.5} & \textbf{1.38} \\
        &  &  &  & CodeLlama-7b-Instruct-hf & 182.7 & 64.4 & 14.9 & 0.72 \\
        &  &  &  & DeepSeek-R1-Distill-Qwen-1.5B & 196.4 & 50.3 & 19.5 & 0.94 \\
    \cline{2-9} \cline{3-9} \cline{4-9}
        & \multirow[t]{3}{*}{scrolls} & \multirow[t]{3}{*}{1 * H100 NVL} & AR & No Drafter (Autoregressive) & \textbf{933.1} & 77.7 & 12.5 & 1.0 \\
    \cline{4-9}
        &  &  & SLEM & DeepSeek-R1-Distill-Qwen-1.5B & 988.1 & \textbf{57.6} & \textbf{17.1} & \textbf{1.37} \\
        &  &  &  & vicuna-68m & 979.9 & 59.3 & 16.5 & 1.32 \\
    \cline{1-9} \cline{2-9} \cline{3-9} \cline{4-9}
    \multirow[t]{3}{*}{phi-4} & \multirow[t]{3}{*}{scrolls} & \multirow[t]{3}{*}{1 * H100 NVL} & AR & No Drafter (Autoregressive) & 483.9 & 47 & 21.3 & 1.0 \\
    \cline{4-9}
        &  &  & \multirow[t]{2}{*}{SLEM} & Qwen2.5-0.5B-Instruct & \textbf{457.7} & \textbf{29.5} & \textbf{33.9} & \textbf{1.59} \\
        &  &  &  & Phi-3.5-mini-instruct & 646.9 & 39.6 & 25.3 & 1.19 \\
    \cline{1-9} \cline{2-9} \cline{3-9} \cline{4-9}

    \multirow[t]{3}{*}{CodeLlama-13b-Instruct-hf} & \multirow[t]{3}{*}{humaneval} & \multirow[t]{3}{*}{1 * A6000} & AR & No Drafter (Autoregressive) & 70.7 & 46.8 & 21.4 & 1.0 \\
    \cline{4-9}
        &  &  & \multirow[t]{2}{*}{SLEM} & tiny\_starcoder\_py & \textbf{109.7} & \textbf{16.7} & \textbf{59.7} & \textbf{2.79} \\
        &  &  &  & CodeLlama-7b-Instruct-hf & 146.5 & 21.8 & 45.8 & 2.14 \\
    \cline{1-9} \cline{2-9} \cline{3-9} \cline{4-9}

\end{longtable}
\begin{longtable}{lllllllll}
\caption{Benchmark comparing \cref{alg:vocabs-intersection} (\tli) and autoregressive decoding (AR) for widely used models, tasks, and hardware setups. The results demonstrate that \tli increases throughput by up to $1.7\times$ over AR.
Note that the target models below do not have homogeneous drafters that are available off-the-shelf. For some target models, their in-family drafters are heterogeneous, as their vocabularies differ. Examples include the target model \texttt{phi-4} with the drafter \texttt{Phi-3.5-mini-instruct}, and the \texttt{DeepSeek-R1-Distill-Qwen} model family.} \label{table:benchmark_short_temp_1} \\
    \toprule
     &  &  &  &  & TTFT (ms) & TPOT (ms) & Tok/s & Speedup \\
    Target & Dataset & Hardware & Method & Drafter &  &  &  &  \\
    \midrule
    \endfirsthead
    \toprule
     &  &  &  &  & TTFT (ms) & TPOT (ms) & T/s & Speedup \\
    Target & Dataset & Hardware & Method & Drafter &  &  &  &  \\
    \midrule
    \endhead
    \midrule
    \multicolumn{9}{r}{Continued on next page} \\
    \midrule
    \endfoot
    \bottomrule
    \endlastfoot
    \multirow[t]{9}{*}{Mixtral-8x22B-Instruct-v0.1} & \multirow[t]{3}{*}{scrolls} & \multirow[t]{3}{*}{4 * H100 NVL} & AR & No Drafter (Autoregressive) & 1334.7 & 168.7 & 5.9 & 1.0 \\
    \cline{4-9}
     &  &  & \multirow[t]{2}{*}{TLI} & Qwen2.5-0.5B-Instruct & 1372.6 & \textbf{97.8} & \textbf{9.9} & \textbf{1.69} \\
     &  &  &  & vicuna-68m & \textbf{1329.7} & 138.2 & 7.2 & 1.22 \\
    \cline{2-9} \cline{3-9} \cline{4-9}
     & \multirow[t]{3}{*}{openai\_humaneval} & \multirow[t]{3}{*}{4 * H100 NVL} & AR & No Drafter (Autoregressive) & \textbf{217.5} & 128.1 & 7.8 & 1.0 \\
    \cline{4-9}
     &  &  & \multirow[t]{2}{*}{TLI} & Qwen2.5-0.5B-Instruct & 266.9 & 90.6 & 10.9 & 1.4 \\
     &  &  &  & vicuna-68m & 228.5 & \textbf{74.8} & \textbf{13.0} & \textbf{1.67} \\
    \cline{2-9} \cline{3-9} \cline{4-9}
     & \multirow[t]{3}{*}{cnn\_dailymail} & \multirow[t]{3}{*}{4 * H100 NVL} & AR & No Drafter (Autoregressive) & \textbf{266.8} & 128.1 & 7.8 & 1.0 \\
    \cline{4-9}
     &  &  & \multirow[t]{2}{*}{TLI} & Qwen2.5-0.5B-Instruct & 294.5 & 88.9 & 11.2 & 1.43 \\
     &  &  &  & vicuna-68m & 297.3 & \textbf{81.0} & \textbf{11.9} & \textbf{1.53} \\
    \cline{1-9} \cline{2-9} \cline{3-9} \cline{4-9}
    \multirow[t]{3}{*}{phi-4} & \multirow[t]{3}{*}{scrolls} & \multirow[t]{3}{*}{1 * H100 NVL} & AR & No Drafter (Autoregressive) & 487.4 & 47.2 & 21.2 & 1.0 \\
    \cline{4-9}
     &  &  & \multirow[t]{2}{*}{TLI} & Qwen2.5-0.5B-Instruct & \textbf{454.7} & \textbf{32.5} & \textbf{30.8} & \textbf{1.45} \\
     &  &  &  & Phi-3.5-mini-instruct & 610.4 & 46.0 & 21.7 & 1.03 \\
    \cline{1-9} \cline{2-9} \cline{3-9} \cline{4-9}

    \multirow[t]{3}{*}{CodeLlama-13b-Instruct-hf} & \multirow[t]{3}{*}{humaneval} & \multirow[t]{3}{*}{1 * A6000} & AR & No Drafter (Autoregressive) & 70.5 & 45.3 & 22.1 & 1.0 \\
    \cline{4-9}
     &  &  & \multirow[t]{2}{*}{TLI} & tiny\_starcoder\_py & \textbf{65.1} & \textbf{25.9} & \textbf{38.5} & \textbf{1.74} \\
     &  &  &  & CodeLlama-7b-Instruct-hf & 141.3 & 25.6 & 39.1 & 1.77 \\
     \cline{1-9} \cline{2-9} \cline{3-9} \cline{4-9}
    
    \multirow[t]{10}{*}{DeepSeek-R1-Distill-Qwen-14B} & \multirow[t]{3}{*}{scrolls} & \multirow[t]{3}{*}{1 * RTX 6000} & AR & No Drafter (Autoregressive) & \textbf{1479.5} & 88.3 & 10.8 & 1.0 \\
    \cline{4-9}
     &  &  & TLI & DeepSeek-R1-Distill-Qwen-1.5B & 1640.7 & 61.6 & 16.1 & 1.5 \\
     &  &  &  & vicuna-68m & 1502.2 & \textbf{57.2} & \textbf{17.1} & \textbf{1.59} \\
    \cline{2-9} \cline{3-9} \cline{4-9}
     & \multirow[t]{3}{*}{cnn\_dailymail} & \multirow[t]{3}{*}{1 * RTX 6000} & AR & No Drafter (Autoregressive) & \textbf{176.1} & 54.4 & 18.4 & 1.0 \\
    \cline{4-9}
     &  &  & TLI & DeepSeek-R1-Distill-Qwen-1.5B & 240.5 & 44.7 & 21.4 & 1.16 \\
     &  &  &  & vicuna-68m & 202.4 & \textbf{40.6} & \textbf{24.1} & \textbf{1.31} \\
    \cline{2-9} \cline{3-9} \cline{4-9}
     & \multirow[t]{4}{*}{openai\_humaneval} & \multirow[t]{4}{*}{1 * RTX 6000} & AR & No Drafter (Autoregressive) & \textbf{90.4} & 50.9 & 19.6 & 1.0 \\
    \cline{4-9}
     &  &  & \multirow[t]{2}{*}{TLI} & tiny\_starcoder\_py & 93.9 & \textbf{38.6} & \textbf{25.4} & \textbf{1.3} \\
     &  &  &  & CodeLlama-7b-Instruct-hf & 150.2 & 66.0 & 14.6 & 0.75 \\
     &  &  &  & DeepSeek-R1-Distill-Qwen-1.5B & 172.6 & 45.6 & 21.2 & 1.08 \\
    \cline{1-9} \cline{2-9} \cline{3-9} \cline{4-9}
\end{longtable}
\twocolumn
}

Tables \ref{table:vocab-sizes}, \ref{table:vocabs-overlap}, and \ref{table:vocabs-overlap-for-tasks} in \cref{appendix:vocabs} examine the vocabularies of widely used off-the-shelf target and drafter models.
\cref{table:vocab-sizes} shows the vocabulary size of each model.
\cref{table:vocabs-overlap} shows the size of the intersection between the draft and target vocabularies and the ratio of the intersection size to the target vocabulary size for various model pairs. We can see a wide range of overlap sizes and ratios, however, none of them are empty. This observation is consistent with our aforementioned assumption that the intersection between the draft and target vocabularies is non-empty in practice.
\cref{table:vocabs-overlap-for-tasks} extends \cref{table:vocabs-overlap} by showing the overlap sizes and ratios over various tasks.

To facilitate additional standardized benchmarks, we have open-sourced our benchmarking repository, which provides full reproducibility. The code is available at \texttt{\href{https://github.com/keyboardAnt/hf-bench}{github.com/keyboardAnt/hf-bench}}. See \cref{appendix:speedups} for implementation details.

\section{Discussion}\label{sec:discussion}
To speed up the inference of a given target model, we need to select a drafter and a decoding algorithm. \cref{table:expected-acceptance-rates} summarizes the expected probability of accepting the next token for all the speculation algorithms when the drafter has a different vocabulary than the target. Note that the effectiveness of each algorithm depends on the properties of the drafter. \cref{table:draft-constraints} outlines the necessary constraints that the drafter must satisfy for each algorithm to be effective in practice. If these constraints are not met, selecting an alternative algorithm is recommended.
Future work is discussed in \cref{appendix:future-work}.

\begin{table}[htb]
    \centering
    \caption{Expected acceptance rates given heterogeneous vocabularies for all speculation methods. The expected acceptance rate of \cref{alg:vocabs-union} is always less than or equal to the expected acceptance rate of \cref{alg:vocabs-intersection}, as \cref{thm:sampling-from-target-vocab-increases-the-ar} proves.}
    \small
    \begin{tabular}{@{}ll@{}}
    \toprule
    \textbf{Method} & \textbf{Expected Acceptance Rate} \\ \midrule
    Alg~\ref{alg:sd}~(SD)                               & Undefined \\[1em]
    Alg~\ref{alg:vocabs-union}                          & $\sum_{t \in T \cap D} \min \set{p(t), q(t)}$ \\[1em]
    Alg~\ref{alg:exact-matching}~(\slem)                & $\sum_{t \in T} \left[p(t) \cdot \psi(t)\right]$ \\[1em]
    Alg~\ref{alg:string-sd}~(\slrs)                     & $\sum_{t \in T} \min \set{p(t), \psi(t)}$ \\[1em]
    Alg~\ref{alg:vocabs-intersection}~(\tli)            & $\sum_{t \in T \cap D} \min \set{p(t), \frac{q(t)}{\sum_{x \in T \cap D}{q(x)}}}$ \\ \bottomrule
    \end{tabular}
    \label{table:expected-acceptance-rates}
\end{table}

\begin{table}[htb]
    \centering
    \caption{Informal constraints on the drafter for different algorithms to ensure effectiveness. If the constraints are not met, an alternative algorithm should be selected. Since the acceptance rate of \cref{alg:vocabs-intersection} is always greater than or equal to that of \cref{alg:vocabs-union}, selecting \cref{alg:vocabs-intersection} over \cref{alg:vocabs-union} is always beneficial, assuming the implementation overhead is negligible. A necessary condition for the effectiveness of Algorithms \ref{alg:exact-matching}, \ref{alg:string-sd}, and \ref{alg:vocabs-intersection} is that the drafter must approximate the target distribution sufficiently well. The effectiveness of \cref{alg:string-sd} is further enhanced when the drafter's vocabulary consists of short tokens. The effectiveness of \cref{alg:vocabs-intersection} improves as the number of tokens in the intersection between the vocabularies increases.}
    \small
    \begin{tabular}{@{}ll@{}}
    \toprule
    \textbf{Algorithm} & \textbf{Drafter Constraints} \\ \midrule
    Alg~\ref{alg:vocabs-union}                  & Not Applicable (instead, select Alg~\ref{alg:vocabs-intersection}) \\[0.5em]
    Alg~\ref{alg:exact-matching}~(\slem)        & Accurate \\[0.5em]
    Alg~\ref{alg:string-sd}~(\slrs)             & Accurate, short tokens \\[0.5em]
    Alg~\ref{alg:vocabs-intersection}~(\tli)    & Accurate, large overlap of vocabs \\ \bottomrule
    \end{tabular}
    \label{table:draft-constraints}
\end{table}

\paragraph{Practical Implications.} Practitioners can leverage speculative decoding (SD) to significantly accelerate the inference of off-the-shelf LLMs, even when no drafter with the same vocabulary as the target model is available. This advancement eliminates the need for extensive computational resources, as it bypasses the costly and time-consuming process of training a dedicated drafter. Furthermore, our approach allows practitioners to integrate SD seamlessly into existing inference pipelines without requiring any modifications to the target model's architecture or retraining procedures.
The proposed algorithms expand the applicability of SD to a broader range of use cases, including models with different tokenization schemes. This is particularly relevant for practitioners and researchers who rely on pre-trained models (e.g., from the Hugging Face Hub), each with distinct vocabularies. Our methods provide practical solutions to unify heterogeneous models under a single SD framework, enhancing efficiency across diverse applications.

\paragraph{Limitations.} A fundamental limitation of SD algorithms is their dependence on the acceptance rate of the drafter and the latency of its forward pass, as extensively analyzed in \citet{timor2025distributed}. When the drafter approximates the target distribution inaccurately, the acceptance rate decreases, leading to diminished performance improvements. Our proposed methods are no exception to this constraint. 
Unlike standard SD that limits the drafter to in-family models, our algorithms open the door to using off-the-shelf target-drafter pairs that differ in their architecture and the way they were trained, although both can critically affect the acceptance rate.
For drafters with a heterogeneous vocabulary, the inherit mismatches in token granularity might further reduce the likelihood of draft tokens being accepted.
Despite these challenges, our algorithms empirically demonstrate significant accelerations not only for heterogeneous drafters (\cref{sec:empirical-results}) but also homogeneous ones (\cref{appendix:speedups}) while employing drafters that are faster than the fastest in-family model. However, for cases with insufficiently fast or accurate drafters, our methods might fail, as \cref{appendix:speedups} shows.

\section*{Acknowledgments}
We are grateful to Roy Schwartz from The Hebrew University of Jerusalem for his valuable feedback in improving this work.
We thank Jo\~ao Gante and the Hugging Face team for reviewing the code and providing valuable feedback that contributed to its implementation in the Transformers library.

This work was partially funded by the Israel Science Foundation (ISF grant 3698/21). Additional support was provided by a research grant to David Harel from Louis J. Lavigne and Nancy Rothman, the Carter Chapman Shreve Family Foundation, Dr. and Mrs. Donald Rivin, and the Estate of Smigel Trust.

\section*{Impact Statement}
This work lowers the cost and latency of LLM inference---making the serving of these models cheaper, faster, and more accessible to a wider range of users.

\bibliography{bib}
\bibliographystyle{icml2025}

\appendix
\onecolumn

\section{Future Work}\label{appendix:future-work}

Future work includes assessing the effectiveness and applicability of \cref{alg:string-sd} in real-world scenarios, particularly with drafters of small vocabularies such as \citealp{wang2024mambabyte}, and exploring drafter adjustment strategies for \cref{alg:vocabs-intersection} to increase acceptance~rates.

\section{Standard Speculative Decoding}

Generating the next token via autoregressive decoding requires computing a target forward pass. Standard SD methods, like \cref{alg:sd}, tend to utilize this target forward pass to verify multiple candidate tokens at once via a data parallelism technique known as batching, which is supported by modern hardware such as GPUs and TPUs. Running \cref{alg:sd} to generate the next target token requires only one target forward pass (\cref{line:batching} of \cref{alg:sd}) although the algorithm could generate between one and $i+1$ new tokens. Since computing the target forward pass is often the slowest and most expensive operation in the inference pipeline, the ability of SD methods like \cref{alg:sd} to reduce the number of required target forward passes is the key to their efficiency, as was previously shown in \citet{leviathan2023fast, chen2023accelerating, timor2025distributed}.

\cref{alg:sd} samples draft tokens from the drafter and then decides whether to accept or reject each draft token based on the target model's logits. The algorithm is widely used in practice and has been shown to be effective in accelerating the inference of large language models. The algorithm is lossless, meaning that it outputs tokens that distribute as the output tokens of standard autoregressive decoding.

\begin{algorithm}[h]
    \caption{Standard Speculative Decoding (Adapted from \citealp{leviathan2023fast,chen2023accelerating})}
    \label{alg:sd}
    \begin{algorithmic}[1]
    \STATE \textbf{Input:} Probability distributions $p$ and $q$ over a vocabulary $T$. Drafting lookahead $i \in \mathbb{N}$. An input prompt~$c$.
    \STATE \textbf{Output:} A sequence of tokens from $T$, containing between $1$ and $i+1$ tokens.
    \STATE \textbf{Procedure:}
    \STATE For $j \leftarrow 1, \ldots, i$:
    \STATE \quad Sample a draft token from the drafter conditioned on the prompt and previous drafts, $d_j \sim q_{c \oplus d_1 \oplus \ldots \oplus d_{j-1}}$ (where $d_0 := c$).
    \LineLabel{line:batching} \STATE With data parallelism (batching), compute via one target forward pass the $i+1$ logits of the target model conditioned on the prompt and all the draft continuations, $p_{c},~~p_{c \oplus d_1},~~\cdots,~~p_{c \oplus d_1 \oplus \cdots \oplus d_i}$.
    \STATE For $j \leftarrow 1, \ldots, i$:
    \STATE \quad Let $x \leftarrow c \oplus d_1 \oplus \cdots \oplus d_{j-1}$ (where $d_0 := c$).
    \LineLabel{line:reject} \STATE \quad If $p_{x}(d_j) \le q_{x}(d_j)$, with probability $1 - \frac{p_x(d_j)}{q_x(d_j)}$, \textbf{reject}~$d_j$ and go to \cref{line:count-accepted-drafts} (namely, break this for-loop).
    \STATE \quad \textbf{Accept} the draft token $d_j$.
    \LineLabel{line:count-accepted-drafts}\STATE Let $j \in \set{0, 1, \ldots, i}$ be the number of accepted drafts. Set $x \leftarrow c \oplus d_1 \oplus \ldots \oplus d_j$.
    \STATE Sample $t~\sim~r_{x}$ for $r_x(t)~:=~\frac{p_x(t) - \min\set{p_x(t), q_x(t)}}{1 - \sum_{t' \in T}\min\set{p_x(t'), q_x(t')}}$ if line \ref{line:reject} ever rejected a token. Otherwise, sample $t~\sim~p_x$.
    \STATE \textbf{Return} $d_1, \ldots, d_j, t$.
    \end{algorithmic}
\end{algorithm}

\section{Empirical Analysis of $\psi(t)$ Computation in \cref{alg:string-sd}: Challenges and Insights}\label{appendix:experiment-calculating-psi}

This section presents our empirical analysis of the computational complexity involved in calculating \(\psi(t)\) using a real-world vocabulary. Specifically, we examine the \texttt{Qwen2-7B-Instruct} model's vocabulary to evaluate how the number of terms in \(\psi(t)\) scales with the length of the target token \(t\). Our findings support the theoretical prediction in \cref{lemma:complexity-of-psi}, which states that the number of terms grows exponentially with the token length.
We select $150{,}000$ of the shortest tokens from a total of $151{,}646$ tokens in the \texttt{Qwen2-7B-Instruct} vocabulary to keep the computation tractable. We then count how many ways a target token $t$ can be reconstructed by concatenating these shorter tokens. For instance, in the case of $t=\tok{hello}$, we found 14 valid combinations out of the $16$ that would appear in a \textit{complete} vocabulary (as defined in \cref{sec:efficient-calculation}), indicating that the vocabulary of this model may be nearly complete for five-character tokens.
\cref{fig:decomposition-hello} lists all $14$ valid combinations for the string $\tok{hello}$ and visualizes them in a tree structure, where each leaf node represents a valid combination. In general, the number of forward passes of the drafter model that are required to calculate $\psi(t)$ is equal to the number of non-leaf nodes in the tree plus one. In this example, calculating $\psi(\tok{hello})$ requires $16$ forward passes of the drafter model, which makes \cref{alg:string-sd} with this vocabulary impractical for many target models that are considered state-of-the-art, including the open access models StarCoder (\citealp{li2023starcoder}), Llama (\citealp{dubey2024llama}), and DeepSeek (\citealp{,deepseekai2025deepseek}). In a similar way to the above example for the token $\tok{hello}$, we decompose each of the $150{,}000$ selected tokens into the set of all its corresponding combinations.
\begin{figure}[htbp]
    \centering
    \begin{minipage}[t]{0.45\textwidth}
        \vspace{0pt}
        \begin{align*}
            1.~& [\tok{H}, \tok{e}, \tok{l}, \tok{l}, \tok{o}] \\
            2.~& [\tok{H}, \tok{e}, \tok{l}, \tok{lo}] \\
            3.~& [\tok{H}, \tok{e}, \tok{ll}, \tok{o}] \\
            4.~& [\tok{H}, \tok{el}, \tok{l}, \tok{o}] \\
            5.~& [\tok{H}, \tok{el}, \tok{lo}] \\
            6.~& [\tok{H}, \tok{ell}, \tok{o}] \\
            7.~& [\tok{H}, \tok{ello}] \\
            8.~& [\tok{He}, \tok{l}, \tok{l}, \tok{o}] \\
            9.~& [\tok{He}, \tok{l}, \tok{lo}] \\
            10.~& [\tok{He}, \tok{ll}, \tok{o}] \\
            11.~& [\tok{Hel}, \tok{l}, \tok{o}] \\
            12.~& [\tok{Hel}, \tok{lo}] \\
            13.~& [\tok{Hell}, \tok{o}] \\
            14.~& [\tok{Hello}]
        \end{align*}
    \end{minipage}
    \hfill
    \begin{minipage}[t]{0.45\textwidth}
        \vspace{0pt}
        \centering
        \begin{forest}
    for tree={
        font=\ttfamily,
        grow'=0,
        child anchor=west,
        parent anchor=south,
        anchor=west,
        calign=child edge,
        l sep=10pt,
        edge path={%
            \noexpand\path [draw, thick] (!u.parent anchor) -- +(5pt,0) |- (.child anchor)\forestoption{edge label};},
        fit=band,
        before computing xy={l=20pt},
    }
    [ , phantom
      [H
        [e
          [l
            [l
              [o, label=right:\checkmark]
            ]
            [lo, label=right:\checkmark]
          ]
          [ll
            [o, label=right:\checkmark]
          ]
        ]
        [el
          [l
            [o, label=right:\checkmark]
          ]
          [lo, label=right:\checkmark]
        ]
        [ell
          [o, label=right:\checkmark]
        ]
        [ello, label=right:\checkmark]
      ]
      [He
        [l
          [l
            [o, label=right:\checkmark]
          ]
          [lo, label=right:\checkmark]
        ]
        [ll
          [o, label=right:\checkmark]
        ]
      ]
      [Hel
        [l
          [o, label=right:\checkmark]
        ]
        [lo, label=right:\checkmark]
      ]
      [Hell
        [o, label=right:\checkmark]
      ]
      [Hello, label=right:\checkmark]
    ]
\end{forest}
    \end{minipage}
    \caption{Left: All the $14$ valid combinations of tokens from the \texttt{Qwen2-7B-Instruct} vocabulary that can be concatenated to form the string $\tok{hello}$. Right: Tree visualization of all these combinations. Each of the $14$ checkmarks indicate a valid combination, which is a leaf in the visualized tree. In this example, calculating $\psi(t)$ from \cref{alg:string-sd} requires $16$ forward passes of the drafter model, which is the number of non-leaf nodes in the tree plus one. This large number of forward passes is due to the exponential growth in the number of valid combinations as the token length increases, as shown in \cref{fig:combinations-for-token-length}.}
    \label{fig:decomposition-hello}
\end{figure}
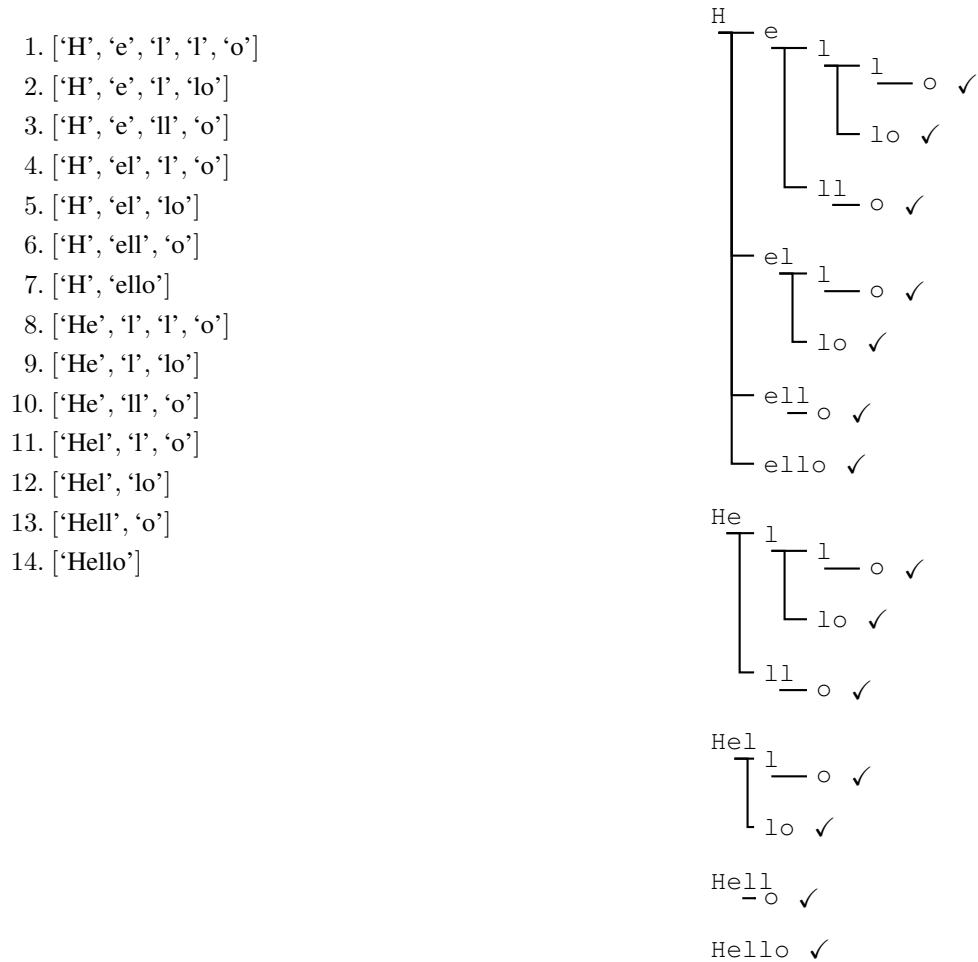
\cref{table:token-length-and-combinations-stats} summarizes the statistical properties of the token lengths and the number of combinations for the selected tokens. The mean token length is $6.21$ characters, with a standard deviation of $2.87$. The mean number of combinations is $144.31$, with a standard deviation of $880.98$. The maximum number of combinations is $65{,}536$. The median number of combinations is $15$, and the $75$th percentile is $56$.
\begin{table}[htbp]
    \centering
    \scriptsize
    \caption{Statistical summary of token length and number of combinations for a set of $150{,}000$ shortest tokens (out of a total of $151{,}646$ tokens) in the \texttt{Qwen2-7B-Instruct} vocabulary.}
    \begin{tabular}{lcc}
        \hline
        & \textbf{Token Length (Number of Characters)} & \textbf{Number of Combinations} \\
        \hline
        \textbf{Mean}          & 6.21  & 144.31    \\
        \textbf{Standard Deviation} & 2.87  & 880.98    \\
        \textbf{Minimum}       & 1.00  & 1.00      \\
        \textbf{25\% Percentile}  & 4.00  & 7.00      \\
        \textbf{50\% (Median)} & 6.00  & 15.00     \\
        \textbf{75\% Percentile}  & 8.00  & 56.00     \\
        \textbf{Maximum}       & 17.00 & 65536.00  \\
        \hline
    \end{tabular}
    \label{table:token-length-and-combinations-stats}
\end{table}
\cref{fig:combinations-for-token-length} shows the number of combinations for different token lengths. The number of combinations grows exponentially with the token length, as expected.
\cref{fig:distribution-of-num-of-combinations} shows the histogram and kernel density estimate of the number of combinations for the $150{,}000$ selected tokens. The distribution is right-skewed, with a long tail of tokens having a large number of combinations.
This exponential blow-up renders the calculation of \(\psi(t)\) computationally infeasible for longer tokens, especially those among the $1{,}646$ longest in the vocabulary. In practice, we could not even count all combinations for those tokens even after hours of computing time on a server, although only counting the combinations is an easier task than listing them.
These results align with our theoretical expectations. While shorter tokens have a manageable number of decompositions, longer tokens exhibit a combinatorial explosion, underscoring the importance of using drafter models with smaller, more concise vocabularies to reduce computational overhead.
Although \cref{alg:string-sd} guarantees lossless speculative decoding, the latency incurred by the computation of \(\psi(t)\) may be prohibitive when the vocabulary includes very long tokens. Consequently, its applicability might be limited to models with compact or pruned vocabularies—such as MambaByte~\citep{wang2024mambabyte}—that can balance accuracy with computational feasibility.
Further research should explore heuristic or approximate methods to calculate \(\psi(t)\) without exhaustive enumeration. Additionally, continued work on vocabulary construction and pruning techniques that reduce redundant token entries could help mitigate these computational challenges.
\begin{figure}[htbp]
    \centering
    \includegraphics[width=\textwidth]{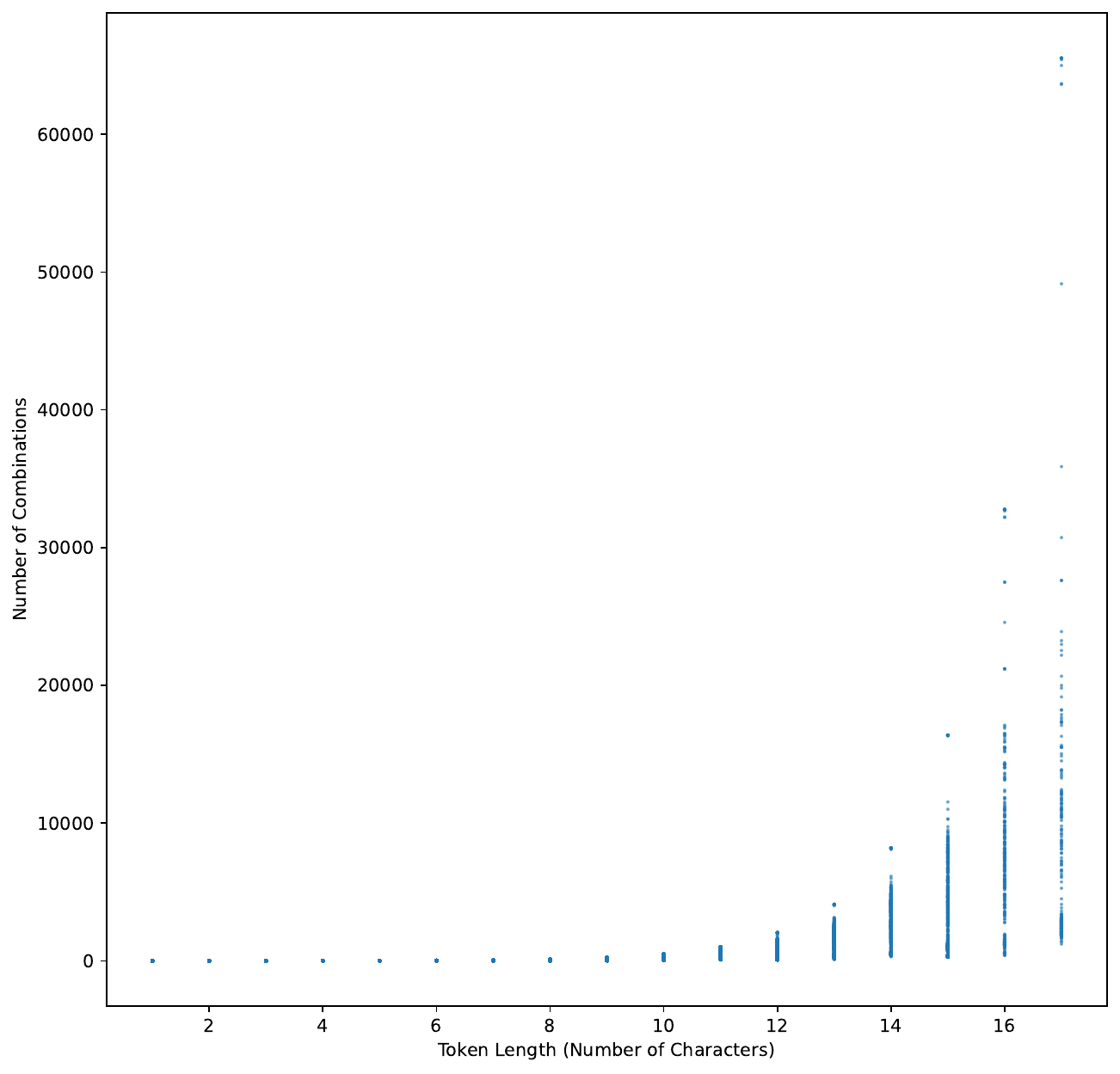}
    \caption{The number of combinations for different token lengths for the $150{,}000$ selected tokens from the \texttt{Qwen2-7B-Instruct} vocabulary. We can see that the number of combinations grows exponentially with the token length.}
    \label{fig:combinations-for-token-length}
\end{figure}
\begin{figure}[htbp]
    \centering
    \includegraphics[width=0.6\textwidth]{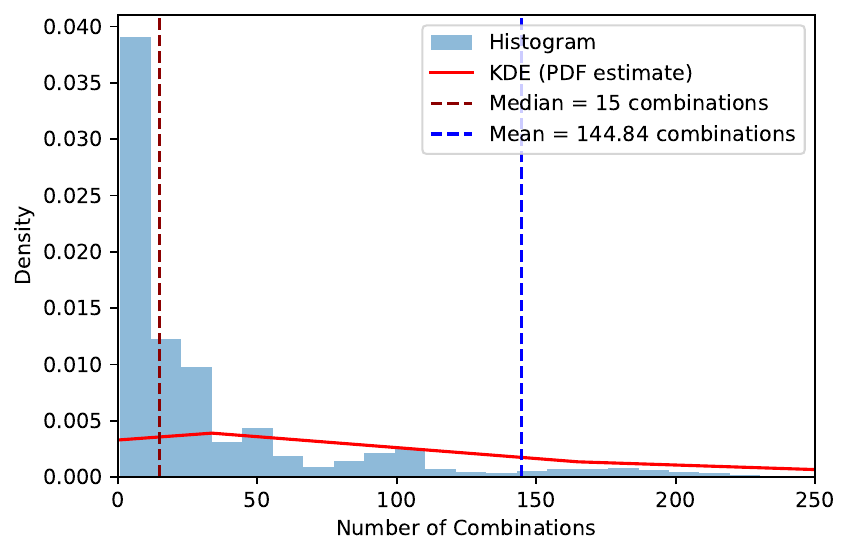}
    \caption{Histogram and Kernel Density Estimate of number of combinations for the $150{,}000$ selected tokens from the \texttt{Qwen2-7B-Instruct} vocabulary. We can see that the number of combinations is right-skewed, with a long tail of tokens with a large number of combinations. For exact values, see \cref{table:token-length-and-combinations-stats}.}
    \label{fig:distribution-of-num-of-combinations}
\end{figure}

\section{Speedups}\label{appendix:speedups}
We evaluate our methods on various combinations of models, tasks, and hardware setups.
Tables~\ref{table:benchmark_full_temp_0} and \ref{table:benchmark_full_temp_1} provide full benchmarks for \slem (\cref{alg:exact-matching}) where the temperature is zero, and \tli (\cref{alg:vocabs-intersection}) where the temperature is one, respectively.
The benchmark includes widely used models: DeepSeek \citep{deepseekai2025deepseek}, Phi  (\citealp{abdin2024phi4technicalreport, abdin2024phi3technicalreporthighly}), Gemma2 \citep{gemmateam2024gemma2improvingopen}, Mixtral \citep{jiang2024mixtralexperts}, Qwen2.5 \citep{qwen2025qwen25technicalreport}, Vicuna \citep{vicuna_2023}, Llama \citep{dubey2024llama}, CodeLlama \citep{rozière2024codellamaopenfoundation}, and Starcoder \citep{li2023starcoder}.
Note that for some targets, all the drafters are heterogeneous despite both target and drafter belonging to the same model family. For example, for the target \texttt{phi-4}, the drafter \texttt{Phi-3.5-mini-instruct} is heterogeneous. This is also the case for the \texttt{DeepSeek-R1-Distill-Qwen} model family, where some in-family models are heterogeneous, and therefore we cannot accelerate them using standard speculative decoding.
The datasets span three tasks: code generation using HumanEval~\cite{HumanEval2021}, text summarization using CNN-DailyMail~\cite{CNN_DM_2016}, and long-context task using SCROLLS~\cite{shaham-etal-2022-scrolls}.
For each dataset, the results are averaged over $30$ prompts such that we generate between $128$ and $512$ new tokens for each prompt.
AR denotes autoregressive decoding, SD denotes the official implementation in Hugging Face Transformers of standard speculative decoding like \cref{alg:sd} \citep{gante2023assisted}.

\newpage
{\tiny 
\begin{longtable}{lllllllll}
\caption{Full benchmark for SLEM (\cref{alg:exact-matching}).} \label{table:benchmark_full_temp_0} \\
    \toprule
     &  &  &  &  & TTFT (ms) & TPOT (ms) & Tok/s & Speedup \\
    Target & Dataset & Hardware & Method & Drafter &  &  &  &  \\
    \midrule
    \endfirsthead
    \toprule
     &  &  &  &  & TTFT (ms) & TPOT (ms) & T/s & Speedup \\
    Target & Dataset & Hardware & Method & Drafter &  &  &  &  \\
    \midrule
    \endhead
    \midrule
    \multicolumn{9}{r}{Continued on next page} \\
    \midrule
    \endfoot
    \bottomrule
    \endlastfoot
    \multirow[t]{9}{*}{Mixtral-8x22B-Instruct-v0.1} & \multirow[t]{3}{*}{cnn\_dailymail} & \multirow[t]{3}{*}{4 * H100 NVL} & AR & No Drafter (Autoregressive) & \textbf{266.8} & 127.9 & 7.8 & 1.0 \\
    \cline{4-9}
     &  &  & \multirow[t]{2}{*}{SLEM} & Qwen2.5-0.5B-Instruct & 321.2 & 68.3 & 13.3 & 1.71 \\
     &  &  &  & vicuna-68m & 302.4 & \textbf{57.3} & \textbf{16.4} & \textbf{2.1} \\
    \cline{2-9} \cline{3-9} \cline{4-9}
     & \multirow[t]{3}{*}{scrolls} & \multirow[t]{3}{*}{4 * H100 NVL} & AR & No Drafter (Autoregressive) & \textbf{1331.9} & 163.0 & 6.0 & 1.0 \\
    \cline{4-9}
     &  &  & \multirow[t]{2}{*}{SLEM} & Qwen2.5-0.5B-Instruct & 1414.2 & \textbf{81.0} & \textbf{10.3} & \textbf{1.71} \\
     &  &  &  & vicuna-68m & 1344.5 & 132.5 & 7.4 & 1.24 \\
    \cline{2-9} \cline{3-9} \cline{4-9}
     & \multirow[t]{3}{*}{openai\_humaneval} & \multirow[t]{3}{*}{4 * H100 NVL} & AR & No Drafter (Autoregressive) & \textbf{217.5} & 127.9 & 7.8 & 1.0 \\
    \cline{4-9}
     &  &  & \multirow[t]{2}{*}{SLEM} & Qwen2.5-0.5B-Instruct & 484.4 & \textbf{70.2} & 12.0 & 1.53 \\
     &  &  &  & vicuna-68m & 231.5 & 73.3 & \textbf{12.6} & \textbf{1.61} \\
    \cline{1-9} \cline{2-9} \cline{3-9} \cline{4-9}
    \multirow[t]{10}{*}{DeepSeek-R1-Distill-Qwen-14B} & \multirow[t]{3}{*}{scrolls} & \multirow[t]{3}{*}{1 * RTX 6000} & AR & No Drafter (Autoregressive) & \textbf{1481.0} & 87.5 & 10.9 & 1.0 \\
    \cline{4-9}
     &  &  & SLEM & DeepSeek-R1-Distill-Qwen-1.5B & 1665.4 & 59.1 & 16.0 & 1.48 \\
     &  &  &  & vicuna-68m & 1566.8 & \textbf{56.0} & \textbf{17.3} & \textbf{1.59} \\
    \cline{2-9} \cline{3-9} \cline{4-9}
     & \multirow[t]{3}{*}{cnn\_dailymail} & \multirow[t]{3}{*}{1 * RTX 6000} & AR & No Drafter (Autoregressive) & \textbf{176.8} & 51.7 & 19.2 & 1.0 \\
    \cline{4-9}
     &  &  & SLEM & DeepSeek-R1-Distill-Qwen-1.5B & 287.5 & 69.9 & 14.1 & 0.73 \\
     &  &  &  & vicuna-68m & 243.0 & \textbf{36.2} & \textbf{27.4} & \textbf{1.43} \\
    \cline{2-9} \cline{3-9} \cline{4-9}
     & \multirow[t]{4}{*}{openai\_humaneval} & \multirow[t]{4}{*}{1 * RTX 6000} & AR & No Drafter (Autoregressive) & \textbf{91.3} & 50.3 & 19.8 & 1.0 \\
    \cline{4-9}
     &  &  & \multirow[t]{2}{*}{SLEM} & tiny\_starcoder\_py & 113.4 & \textbf{43.8} & \textbf{22.4} & \textbf{1.14} \\
     &  &  &  & CodeLlama-7b-Instruct-hf & 256.6 & 77.5 & 12.4 & 0.63 \\
     &  &  &  & DeepSeek-R1-Distill-Qwen-1.5B & 292.5 & 70.9 & 13.6 & 0.69 \\
    \cline{1-9} \cline{2-9} \cline{3-9} \cline{4-9}
    \multirow[t]{10}{*}{DeepSeek-R1-Distill-Qwen-32B} & \multirow[t]{3}{*}{cnn\_dailymail} & \multirow[t]{3}{*}{1 * H100 NVL} & AR & No Drafter (Autoregressive) & \textbf{121.2} & 48.0 & 20.8 & 1.0 \\
    \cline{4-9}
     &  &  & SLEM & DeepSeek-R1-Distill-Qwen-1.5B & 167.1 & 51.3 & 18.9 & 0.91 \\
     &  &  &  & vicuna-68m & 148.1 & \textbf{32.5} & \textbf{30.6} & \textbf{1.47} \\
    \cline{2-9} \cline{3-9} \cline{4-9}
     & \multirow[t]{4}{*}{openai\_humaneval} & \multirow[t]{4}{*}{1 * H100 NVL} & AR & No Drafter (Autoregressive) & \textbf{72.0} & 48.3 & 20.7 & 1.0 \\
    \cline{4-9}
     &  &  & \multirow[t]{2}{*}{SLEM} & tiny\_starcoder\_py & 80.1 & \textbf{34.2} & \textbf{28.5} & \textbf{1.38} \\
     &  &  &  & CodeLlama-7b-Instruct-hf & 182.7 & 64.4 & 14.9 & 0.72 \\
     &  &  &  & DeepSeek-R1-Distill-Qwen-1.5B & 196.4 & 50.3 & 19.5 & 0.94 \\
    \cline{2-9} \cline{3-9} \cline{4-9}
     & \multirow[t]{3}{*}{scrolls} & \multirow[t]{3}{*}{1 * H100 NVL} & AR & No Drafter (Autoregressive) & \textbf{933.1} & 77.7 & 12.5 & 1.0 \\
    \cline{4-9}
     &  &  & SLEM & DeepSeek-R1-Distill-Qwen-1.5B & 988.1 & \textbf{57.6} & \textbf{17.1} & \textbf{1.37} \\
     &  &  &  & vicuna-68m & 979.9 & 59.3 & 16.5 & 1.32 \\
    \cline{1-9} \cline{2-9} \cline{3-9} \cline{4-9}
    \multirow[t]{3}{*}{phi-4} & \multirow[t]{3}{*}{scrolls} & \multirow[t]{3}{*}{1 * H100 NVL} & AR & No Drafter (Autoregressive) & 483.9 & 47 & 21.3 & 1.0 \\
    \cline{4-9}
     &  &  & \multirow[t]{2}{*}{SLEM} & Qwen2.5-0.5B-Instruct & \textbf{457.7} & \textbf{29.5} & \textbf{33.9} & \textbf{1.59} \\
     &  &  &  & Phi-3.5-mini-instruct & 646.9 & 39.6 & 25.3 & 1.19 \\
    \cline{1-9} \cline{2-9} \cline{3-9} \cline{4-9}

    \multirow[t]{3}{*}{CodeLlama-13b-Instruct-hf} & \multirow[t]{3}{*}{humaneval} & \multirow[t]{3}{*}{1 * A6000} & AR & No Drafter (Autoregressive) & 70.7 & 46.8 & 21.4 & 1.0 \\
    \cline{4-9}
     &  &  & \multirow[t]{2}{*}{SLEM} & tiny\_starcoder\_py & \textbf{109.7} & \textbf{16.7} & \textbf{59.7} & \textbf{2.79} \\
     &  &  &  & CodeLlama-7b-Instruct-hf & 146.5 & 21.8 & 45.8 & 2.14 \\
    \cline{1-9} \cline{2-9} \cline{3-9} \cline{4-9}
    
    \multirow[t]{10}{*}{DeepSeek-R1-Distill-Qwen-7B} & \multirow[t]{3}{*}{cnn\_dailymail} & \multirow[t]{3}{*}{1 * H100 NVL} & AR & No Drafter (Autoregressive) & \textbf{34.7} & 19.3 & 51.8 & 1.0 \\
    \cline{4-9}
     &  &  & SLEM & DeepSeek-R1-Distill-Qwen-1.5B & 85.1 & 38.6 & 24.6 & 0.48 \\
     &  &  &  & vicuna-68m & 65.2 & \textbf{17.6} & \textbf{55.2} & \textbf{1.07} \\
    \cline{2-9} \cline{3-9} \cline{4-9}
     & \multirow[t]{4}{*}{openai\_humaneval} & \multirow[t]{4}{*}{1 * H100 NVL} & AR & No Drafter (Autoregressive) & \textbf{24.4} & \textbf{19.6} & \textbf{50.9} & \textbf{1.0} \\
    \cline{4-9}
     &  &  & \multirow[t]{2}{*}{SLEM} & tiny\_starcoder\_py & 36.1 & 23.7 & 39.6 & 0.78 \\
     &  &  &  & CodeLlama-7b-Instruct-hf & 138.0 & 54.4 & 17.5 & 0.34 \\
     &  &  &  & DeepSeek-R1-Distill-Qwen-1.5B & 149.3 & 42.2 & 22.7 & 0.45 \\
    \cline{2-9} \cline{3-9} \cline{4-9}
     & \multirow[t]{3}{*}{scrolls} & \multirow[t]{3}{*}{1 * H100 NVL} & AR & No Drafter (Autoregressive) & \textbf{221.2} & \textbf{22.4} & \textbf{44.0} & \textbf{1.0} \\
    \cline{4-9}
     &  &  & SLEM & DeepSeek-R1-Distill-Qwen-1.5B & 296.2 & 41.1 & 23.5 & 0.54 \\
     &  &  & & vicuna-68m & 245.4 & 24.6 & 39.9 & 0.91 \\
    \cline{1-9} \cline{2-9} \cline{3-9} \cline{4-9}
    \multirow[t]{9}{*}{gemma-2-9b-it} & \multirow[t]{3}{*}{scrolls} & \multirow[t]{3}{*}{1 * H100 NVL} & AR & No Drafter (Autoregressive) & \textbf{584.0} & 95.3 & 9.9 & 1.0 \\
    \cline{4-9}
     &  &  & SD & gemma-2-2b-it & 739.0 & \textbf{31.3} & \textbf{30.2} & \textbf{3.05} \\
    \cline{4-9}
     &  &  & SLEM & vicuna-68m & 592.4 & 48.3 & 18.6 & 1.87 \\
    \cline{2-9} \cline{3-9} \cline{4-9}
     & \multirow[t]{3}{*}{openai\_humaneval} & \multirow[t]{3}{*}{1 * H100 NVL} & AR & No Drafter (Autoregressive) & \textbf{42.6} & 37.0 & 27.0 & 1.0 \\
     \cline{4-9}
     &  &  & SD & gemma-2-2b-it & 446.5 & 29.1 & 33.2 & 1.23 \\
    \cline{4-9}
     &  &  & SLEM & vicuna-68m & 51.6 & \textbf{24.5} & \textbf{40.2} & \textbf{1.49} \\
    \cline{2-9} \cline{3-9} \cline{4-9}
     & \multirow[t]{3}{*}{cnn\_dailymail} & \multirow[t]{3}{*}{1 * H100 NVL} & AR & No Drafter (Autoregressive) & \textbf{73.7} & 37.3 & 26.7 & 1.0 \\
     \cline{4-9}
     &  &  & SD & gemma-2-2b-it & 125.4 & 39.7 & 24.6 & 0.92 \\
    \cline{4-9}
     &  &  & SLEM & vicuna-68m & 83.9 & \textbf{26.9} & \textbf{37.1} & \textbf{1.39} \\
    \cline{1-9} \cline{2-9} \cline{3-9} \cline{4-9}
    
    \multirow[t]{19}{*}{DeepSeek-R1-Distill-Llama-70B} & \multirow[t]{9}{*}{openai\_humaneval} & \multirow[t]{4}{*}{2 * A100 80GB PCIe} & AR & No Drafter (Autoregressive) & 297.1 & 122.6 & 8.2 & 1.0 \\
    \cline{4-9}
     &  &  & \multirow[t]{2}{*}{SD} & CodeLlama-7b-Instruct-hf & 428.7 & 101.5 & 9.6 & 1.18 \\
     &  &  &  & DeepSeek-R1-Distill-Llama-8B & 353.5 & \textbf{54.3} & \textbf{18.3} & \textbf{2.25} \\
    \cline{4-9}
     &  &  & SLEM & tiny\_starcoder\_py & \textbf{265.9} & 84.6 & 11.8 & 1.44 \\
    \cline{3-9} \cline{4-9}
     &  & \multirow[t]{5}{*}{2 * H100 NVL} & AR & No Drafter (Autoregressive) & \textbf{130.1} & 76.1 & 13.1 & 1.0 \\
    \cline{4-9}
     &  &  & \multirow[t]{2}{*}{SD} & CodeLlama-7b-Instruct-hf & 277.3 & 93.3 & 10.4 & 0.79 \\
     &  &  &  & DeepSeek-R1-Distill-Llama-8B & 223.5 & \textbf{52.8} & \textbf{18.8} & \textbf{1.43} \\
    \cline{4-9}
     &  &  & SLEM & DeepSeek-R1-Distill-Qwen-1.5B & 297.3 & 60.6 & 16.3 & 1.24 \\
     &  &  &  & tiny\_starcoder\_py & 143.3 & 63.2 & 15.6 & 1.19 \\
    \cline{2-9} \cline{3-9} \cline{4-9}
     & \multirow[t]{7}{*}{cnn\_dailymail} & \multirow[t]{7}{*}{2 * H100 NVL} & AR & No Drafter (Autoregressive) & \textbf{230.7} & 77.6 & 12.9 & 1.0 \\
    \cline{4-9}
     &  &  & SD & DeepSeek-R1-Distill-Llama-8B & 452.5 & 78.7 & 12.5 & 0.97 \\
     &  &  &  & Llama-3.1-8B & 277.9 & 74.4 & 13.4 & 1.04 \\
     &  &  &  & Llama-3.2-1B & 242.3 & \textbf{51.4} & \textbf{19.4} & \textbf{1.51} \\
     &  &  &  & Llama-3.2-3B & 252.1 & 66.8 & 14.9 & 1.16 \\
    \cline{4-9}
     &  &  & \multirow[t]{2}{*}{SLEM} & DeepSeek-R1-Distill-Qwen-1.5B & 296.1 & 72.3 & 13.6 & 1.06 \\
     &  &  &  & vicuna-68m & 263.8 & 51.5 & 19.3 & 1.5 \\
    
    \cline{2-9} \cline{3-9} \cline{4-9}
     & \multirow[t]{3}{*}{scrolls} & \multirow[t]{3}{*}{2 * H100 NVL} & AR & No Drafter (Autoregressive) & \textbf{1836.9} & 127.1 & 7.7 & 1.0 \\
    \cline{4-9}
     &  &  & SD & DeepSeek-R1-Distill-Llama-8B & 2121.4 & 88.0 & 10.9 & 1.42 \\
    \cline{4-9}
     &  &  & SLEM & DeepSeek-R1-Distill-Qwen-1.5B & 1890.9 & \textbf{85.8} & \textbf{11.3} & \textbf{1.47} \\
    \cline{1-9} \cline{2-9} \cline{3-9} \cline{4-9}
    \multirow[t]{15}{*}{DeepSeek-R1-Distill-Llama-8B} & \multirow[t]{5}{*}{scrolls} & \multirow[t]{5}{*}{1 * H100 NVL} & AR & No Drafter (Autoregressive) & \textbf{245.3} & 34.3 & 27.9 & 1.0 \\
    \cline{4-9}
     &  &  & \multirow[t]{2}{*}{SD} & Llama-3.2-1B & 283.1 & \textbf{24.6} & \textbf{39.2} & \textbf{1.41} \\
     &  &  &  & Llama-3.2-3B & 353.1 & 35.2 & 27.3 & 0.98 \\
    \cline{4-9}
     &  &  & \multirow[t]{2}{*}{SLEM} & DeepSeek-R1-Distill-Qwen-1.5B & 315.7 & 45.2 & 20.2 & 0.73 \\
     &  &  &  & vicuna-68m & 263.4 & 27.7 & 34.9 & 1.25 \\
    \cline{2-9} \cline{3-9} \cline{4-9}
     & \multirow[t]{5}{*}{cnn\_dailymail} & \multirow[t]{5}{*}{1 * H100 NVL} & AR & No Drafter (Autoregressive) & \textbf{38.9} & 21.8 & 45.9 & 1.0 \\
     \cline{4-9}
     &  &  & \multirow[t]{2}{*}{SD} & Llama-3.2-1B & 48.2 & 25.6 & 38.6 & 0.84 \\
     &  &  &  & Llama-3.2-3B & 57.2 & 38.1 & 26.0 & 0.57 \\
    \cline{4-9}
     &  &  & \multirow[t]{2}{*}{SLEM} & DeepSeek-R1-Distill-Qwen-1.5B & 88.2 & 43.2 & 22.7 & 0.5 \\
     &  &  &  & vicuna-68m & 66.9 & \textbf{19.7} & \textbf{50.0} & \textbf{1.09} \\
    \cline{2-9} \cline{3-9} \cline{4-9}
     & \multirow[t]{5}{*}{openai\_humaneval} & \multirow[t]{3}{*}{1 * H100 NVL} & AR & No Drafter (Autoregressive) & \textbf{31.9} & \textbf{21.8} & \textbf{45.8} & \textbf{1.0} \\
     \cline{4-9}
     &  &  & SD & CodeLlama-7b-Instruct-hf & 144.0 & 72.3 & 12.5 & 0.27 \\
    \cline{4-9}
     &  &  & SLEM & tiny\_starcoder\_py & 36.7 & 37.1 & 25.8 & 0.56 \\
    \cline{3-9} \cline{4-9}
     &  & \multirow[t]{2}{*}{1 * RTX 6000} & AR & No Drafter (Autoregressive) & \textbf{73.4} & \textbf{40.8} & \textbf{24.5} & \textbf{1.0} \\
     \cline{4-9}
     &  &  & SD  & CodeLlama-7b-Instruct-hf & 279.8 & 120.1 & 7.9 & 0.32 \\
    \cline{4-9}
    &  &  & SLEM & tiny\_starcoder\_py & 96.4 & 52.6 & 18.2 & 0.74 \\
     &  &  &  & DeepSeek-R1-Distill-Qwen-1.5B & 246.2 & 42.7 & 20.4 & 0.83 \\
    \cline{1-9} \cline{2-9} \cline{3-9} \cline{4-9}
\end{longtable}
}

{\tiny
\begin{longtable}{lllllllll}
\caption{Full benchmark for TLI (\cref{alg:vocabs-intersection}).} \label{table:benchmark_full_temp_1} \\
    \toprule
     &  &  &  &  & TTFT (ms) & TPOT (ms) & Tok/s & Speedup \\
    Target & Dataset & Hardware & Method & Drafter &  &  &  &  \\
    \midrule
    \endfirsthead
    \toprule
     &  &  &  &  & TTFT (ms) & TPOT (ms) & T/s & Speedup \\
    Target & Dataset & Hardware & Method & Drafter &  &  &  &  \\
    \midrule
    \endhead
    \midrule
    \multicolumn{9}{r}{Continued on next page} \\
    \midrule
    \endfoot
    \bottomrule
    \endlastfoot
    \multirow[t]{9}{*}{Mixtral-8x22B-Instruct-v0.1} & \multirow[t]{3}{*}{scrolls} & \multirow[t]{3}{*}{4 * H100 NVL} & AR & No Drafter (Autoregressive) & 1334.7 & 168.7 & 5.9 & 1.0 \\
    \cline{4-9}
     &  &  & \multirow[t]{2}{*}{TLI} & Qwen2.5-0.5B-Instruct & 1372.6 & \textbf{97.8} & \textbf{9.9} & \textbf{1.69} \\
     &  &  &  & vicuna-68m & \textbf{1329.7} & 138.2 & 7.2 & 1.22 \\
    \cline{2-9} \cline{3-9} \cline{4-9}
     & \multirow[t]{3}{*}{openai\_humaneval} & \multirow[t]{3}{*}{4 * H100 NVL} & AR & No Drafter (Autoregressive) & \textbf{217.5} & 128.1 & 7.8 & 1.0 \\
    \cline{4-9}
     &  &  & \multirow[t]{2}{*}{TLI} & Qwen2.5-0.5B-Instruct & 266.9 & 90.6 & 10.9 & 1.4 \\
     &  &  &  & vicuna-68m & 228.5 & \textbf{74.8} & \textbf{13.0} & \textbf{1.67} \\
    \cline{2-9} \cline{3-9} \cline{4-9}
     & \multirow[t]{3}{*}{cnn\_dailymail} & \multirow[t]{3}{*}{4 * H100 NVL} & AR & No Drafter (Autoregressive) & \textbf{266.8} & 128.1 & 7.8 & 1.0 \\
    \cline{4-9}
     &  &  & \multirow[t]{2}{*}{TLI} & Qwen2.5-0.5B-Instruct & 294.5 & 88.9 & 11.2 & 1.43 \\
     &  &  &  & vicuna-68m & 297.3 & \textbf{81.0} & \textbf{11.9} & \textbf{1.53} \\
    \cline{1-9} \cline{2-9} \cline{3-9} \cline{4-9}
    \multirow[t]{3}{*}{phi-4} & \multirow[t]{3}{*}{scrolls} & \multirow[t]{3}{*}{1 * H100 NVL} & AR & No Drafter (Autoregressive) & 487.4 & 47.2 & 21.2 & 1.0 \\
    \cline{4-9}
     &  &  & \multirow[t]{2}{*}{TLI} & Qwen2.5-0.5B-Instruct & \textbf{454.7} & \textbf{32.5} & \textbf{30.8} & \textbf{1.45} \\
     &  &  &  & Phi-3.5-mini-instruct & 610.4 & 46.0 & 21.7 & 1.03 \\
    \cline{1-9} \cline{2-9} \cline{3-9} \cline{4-9}

    \multirow[t]{3}{*}{CodeLlama-13b-Instruct-hf} & \multirow[t]{3}{*}{humaneval} & \multirow[t]{3}{*}{1 * A6000} & AR & No Drafter (Autoregressive) & 70.5 & 45.3 & 22.1 & 1.0 \\
    \cline{4-9}
     &  &  & \multirow[t]{2}{*}{TLI} & tiny\_starcoder\_py & \textbf{65.1} & \textbf{25.9} & \textbf{38.5} & \textbf{1.74} \\
     &  &  &  & CodeLlama-7b-Instruct-hf & 141.3 & 25.6 & 39.1 & 1.77 \\
     \cline{1-9} \cline{2-9} \cline{3-9} \cline{4-9}
    
    \multirow[t]{10}{*}{DeepSeek-R1-Distill-Qwen-14B} & \multirow[t]{3}{*}{scrolls} & \multirow[t]{3}{*}{1 * RTX 6000} & AR & No Drafter (Autoregressive) & \textbf{1479.5} & 88.3 & 10.8 & 1.0 \\
    \cline{4-9}
     &  &  & TLI & DeepSeek-R1-Distill-Qwen-1.5B & 1640.7 & 61.6 & 16.1 & 1.5 \\
     &  &  &  & vicuna-68m & 1502.2 & \textbf{57.2} & \textbf{17.1} & \textbf{1.59} \\
    \cline{2-9} \cline{3-9} \cline{4-9}
     & \multirow[t]{3}{*}{cnn\_dailymail} & \multirow[t]{3}{*}{1 * RTX 6000} & AR & No Drafter (Autoregressive) & \textbf{176.1} & 54.4 & 18.4 & 1.0 \\
    \cline{4-9}
     &  &  & TLI & DeepSeek-R1-Distill-Qwen-1.5B & 240.5 & 44.7 & 21.4 & 1.16 \\
     &  &  &  & vicuna-68m & 202.4 & \textbf{40.6} & \textbf{24.1} & \textbf{1.31} \\
    \cline{2-9} \cline{3-9} \cline{4-9}
     & \multirow[t]{4}{*}{openai\_humaneval} & \multirow[t]{4}{*}{1 * RTX 6000} & AR & No Drafter (Autoregressive) & \textbf{90.4} & 50.9 & 19.6 & 1.0 \\
    \cline{4-9}
     &  &  & \multirow[t]{2}{*}{TLI} & tiny\_starcoder\_py & 93.9 & \textbf{38.6} & \textbf{25.4} & \textbf{1.3} \\
     &  &  &  & CodeLlama-7b-Instruct-hf & 150.2 & 66.0 & 14.6 & 0.75 \\
     &  &  &  & DeepSeek-R1-Distill-Qwen-1.5B & 172.6 & 45.6 & 21.2 & 1.08 \\
    \cline{1-9} \cline{2-9} \cline{3-9} \cline{4-9}
    \multirow[t]{10}{*}{DeepSeek-R1-Distill-Qwen-7B} & \multirow[t]{3}{*}{cnn\_dailymail} & \multirow[t]{3}{*}{1 * H100 NVL} & AR & No Drafter (Autoregressive) & \textbf{35.0} & 19.8 & 50.6 & 1.0 \\
    \cline{4-9}
     &  &  & TLI & DeepSeek-R1-Distill-Qwen-1.5B & 95.0 & 27.2 & 36.6 & 0.72 \\
     &  &  &  & vicuna-68m & 108.5 & \textbf{18.4} & \textbf{54.2} & \textbf{1.07} \\
    \cline{2-9} \cline{3-9} \cline{4-9}
     & \multirow[t]{4}{*}{openai\_humaneval} & \multirow[t]{4}{*}{1 * H100 NVL} & AR & No Drafter (Autoregressive) & \textbf{23.4} & \textbf{20.0} & \textbf{49.9} & \textbf{1.0} \\
    \cline{4-9}
     &  &  & \multirow[t]{2}{*}{TLI} & tiny\_starcoder\_py & 40.0 & 22.3 & 44.7 & 0.9 \\
     &  &  &  & CodeLlama-7b-Instruct-hf & 59.6 & 39.2 & 25.3 & 0.51 \\
     &  &  &  & DeepSeek-R1-Distill-Qwen-1.5B & 88.3 & 24.3 & 40.9 & 0.82 \\
    \cline{2-9} \cline{3-9} \cline{4-9}
     & \multirow[t]{3}{*}{scrolls} & \multirow[t]{3}{*}{1 * H100 NVL} & AR & No Drafter (Autoregressive) & \textbf{220.5} & \textbf{22.8} & \textbf{43.2} & \textbf{1.0} \\
    \cline{4-9}
     &  &  & TLI & DeepSeek-R1-Distill-Qwen-1.5B & 296.9 & 29.1 & 34.2 & 0.79 \\
     &  &  &  & vicuna-68m & 238.6 & 25.0 & 39.2 & 0.91 \\
    \cline{1-9} \cline{2-9} \cline{3-9} \cline{4-9}
    \multirow[t]{9}{*}{gemma-2-9b-it} & \multirow[t]{3}{*}{scrolls} & \multirow[t]{3}{*}{1 * H100 NVL} & AR & No Drafter (Autoregressive) & \textbf{585.1} & 90.6 & 10.4 & 1.0 \\
    \cline{4-9}
     &  &  & TLI & vicuna-68m & 603.0 & 46.0 & 21.4 & 2.04 \\
    \cline{4-9}
     &  &  & SD & gemma-2-2b-it & 742.3 & \textbf{37.7} & \textbf{26.0} & \textbf{2.49} \\
    \cline{2-9} \cline{3-9} \cline{4-9}
     & \multirow[t]{3}{*}{openai\_humaneval} & \multirow[t]{3}{*}{1 * H100 NVL} & AR & No Drafter (Autoregressive) & \textbf{42.6} & 37.3 & 26.8 & 1.0 \\
    \cline{4-9}
     &  &  & TLI & vicuna-68m & 92.8 & \textbf{25.1} & \textbf{39.2} & \textbf{1.46} \\
    \cline{4-9}
     &  &  & SD & gemma-2-2b-it & 384.1 & 27.2 & 36.4 & 1.36 \\
    \cline{2-9} \cline{3-9} \cline{4-9}
     & \multirow[t]{3}{*}{cnn\_dailymail} & \multirow[t]{3}{*}{1 * H100 NVL} & AR & No Drafter (Autoregressive) & \textbf{73.7} & 37.8 & 26.5 & 1.0 \\
    \cline{4-9}
     &  &  & TLI & vicuna-68m & 100.1 & \textbf{30.0} & \textbf{33.2} & \textbf{1.26} \\
    \cline{4-9}
     &  &  & SD & gemma-2-2b-it & 117.8 & 33.5 & 29.8 & 1.13 \\
    \cline{1-9} \cline{2-9} \cline{3-9} \cline{4-9}
    \multirow[t]{19}{*}{DeepSeek-R1-Distill-Llama-70B} & \multirow[t]{9}{*}{openai\_humaneval} & \multirow[t]{4}{*}{2 * A100 80GB PCIe} & AR & No Drafter (Autoregressive) & \textbf{244.0} & 123.5 & 8.1 & 1.0 \\
    \cline{4-9}
     &  &  & TLI & tiny\_starcoder\_py & 258.7 & 85.5 & 11.7 & 1.44 \\
    \cline{4-9}
     &  &  & \multirow[t]{2}{*}{SD} & CodeLlama-7b-Instruct-hf & 317.3 & 87.2 & 11.4 & 1.41 \\
     &  &  &  & DeepSeek-R1-Distill-Llama-8B & 358.2 & \textbf{53.1} & \textbf{18.6} & \textbf{2.3} \\
    \cline{3-9} \cline{4-9}
     &  & \multirow[t]{5}{*}{2 * H100 NVL} & AR & No Drafter (Autoregressive) & \textbf{129.9} & 76.7 & 13.0 & 1.0 \\
    \cline{4-9}
     &  &  & TLI & tiny\_starcoder\_py & 147.9 & 55.3 & 18.0 & 1.38 \\
    \cline{4-9}
     &  &  & \multirow[t]{2}{*}{SD} & CodeLlama-7b-Instruct-hf & 214.5 & 68.8 & 14.5 & 1.11 \\
     &  &  &  & DeepSeek-R1-Distill-Llama-8B & 179.7 & \textbf{44.6} & \textbf{22.4} & \textbf{1.72} \\
    \cline{4-9}
     &  &  & TLI & DeepSeek-R1-Distill-Qwen-1.5B & 220.4 & 45.5 & 21.9 & 1.68 \\
    \cline{2-9} \cline{3-9} \cline{4-9}
     & \multirow[t]{3}{*}{scrolls} & \multirow[t]{3}{*}{2 * H100 NVL} & AR & No Drafter (Autoregressive) & \textbf{1837.1} & 126.6 & 7.7 & 1.0 \\
    \cline{4-9}
     &  &  & SD & DeepSeek-R1-Distill-Llama-8B & 2059.4 & \textbf{65.1} & \textbf{15.2} & \textbf{1.98} \\
    \cline{4-9}
     &  &  & TLI & DeepSeek-R1-Distill-Qwen-1.5B & 1898.5 & 70.9 & 13.9 & 1.82 \\
    \cline{2-9} \cline{3-9} \cline{4-9}
     & \multirow[t]{7}{*}{cnn\_dailymail} & \multirow[t]{7}{*}{2 * H100 NVL} & AR & No Drafter (Autoregressive) & \textbf{231.2} & 77.9 & 12.8 & 1.0 \\
    \cline{4-9}
     &  &  & SD & DeepSeek-R1-Distill-Llama-8B & 342.5 & 58.2 & 17.1 & 1.33 \\
    \cline{4-9}
     &  &  & \multirow[t]{2}{*}{TLI} & DeepSeek-R1-Distill-Qwen-1.5B & 315.5 & 59.7 & 16.7 & 1.3 \\
     &  &  &  & vicuna-68m & 263.4 & 55.8 & 17.8 & 1.39 \\
    \cline{4-9}
     &  &  & \multirow[t]{3}{*}{SD} & Llama-3.1-8B & 262.3 & 59.6 & 16.7 & 1.31 \\
     &  &  &  & Llama-3.2-1B & 248.5 & \textbf{51.3} & \textbf{19.4} & \textbf{1.51} \\
     &  &  &  & Llama-3.2-3B & 259.6 & 56.7 & 17.5 & 1.37 \\
    \cline{1-9} \cline{2-9} \cline{3-9} \cline{4-9}
    \multirow[t]{10}{*}{DeepSeek-R1-Distill-Qwen-32B} & \multirow[t]{3}{*}{scrolls} & \multirow[t]{3}{*}{1 * H100 NVL} & AR & No Drafter (Autoregressive) & \textbf{946.4} & 77.3 & 12.5 & 1.0 \\
    \cline{4-9}
     &  &  & TLI & DeepSeek-R1-Distill-Qwen-1.5B & 997.8 & \textbf{43.0} & \textbf{22.7} & \textbf{1.82} \\
     &  &  &  & vicuna-68m & 977.1 & 61.6 & 15.9 & 1.27 \\
    \cline{2-9} \cline{3-9} \cline{4-9}
     & \multirow[t]{4}{*}{openai\_humaneval} & \multirow[t]{4}{*}{1 * H100 NVL} & AR & No Drafter (Autoregressive) & \textbf{72.2} & 48.6 & 20.6 & 1.0 \\
    \cline{4-9}
     &  &  & \multirow[t]{2}{*}{TLI} & tiny\_starcoder\_py & 86.2 & 35.5 & 28.1 & 1.37 \\
     &  &  &  & CodeLlama-7b-Instruct-hf & 123.4 & 49.5 & 20.1 & 0.97 \\
     &  &  & & DeepSeek-R1-Distill-Qwen-1.5B & 147.4 & \textbf{28.5} & \textbf{35.0} & \textbf{1.7} \\
    \cline{2-9} \cline{3-9} \cline{4-9}
     & \multirow[t]{3}{*}{cnn\_dailymail} & \multirow[t]{3}{*}{1 * H100 NVL} & AR & No Drafter (Autoregressive) & \textbf{121.5} & 49.0 & 20.4 & 1.0 \\
    \cline{4-9}
     &  &  & TLI & DeepSeek-R1-Distill-Qwen-1.5B & 167.5 & 37.9 & 26.1 & 1.28 \\
     &  &  &  & vicuna-68m & 146.4 & \textbf{34.2} & \textbf{29.1} & \textbf{1.42} \\
    \cline{1-9} \cline{2-9} \cline{3-9} \cline{4-9}
    \multirow[t]{15}{*}{DeepSeek-R1-Distill-Llama-8B} & \multirow[t]{5}{*}{scrolls} & \multirow[t]{5}{*}{1 * H100 NVL} & AR & No Drafter (Autoregressive) & \textbf{246.7} & 38.6 & 24.9 & 1.0 \\
    \cline{4-9}
     &  &  & \multirow[t]{2}{*}{TLI} & DeepSeek-R1-Distill-Qwen-1.5B & 324.4 & 33.5 & 29.6 & 1.19 \\
     &  &  &  & vicuna-68m & 256.5 & 28.1 & 34.5 & 1.39 \\
    \cline{4-9}
     &  &  & \multirow[t]{2}{*}{SD} & Llama-3.2-1B & 295.2 & \textbf{24.9} & \textbf{39.7} & \textbf{1.6} \\
     &  &  &  & Llama-3.2-3B & 355.9 & 31.7 & 31.2 & 1.25 \\
    \cline{2-9} \cline{3-9} \cline{4-9}
     & \multirow[t]{5}{*}{cnn\_dailymail} & \multirow[t]{5}{*}{1 * H100 NVL} & AR & No Drafter (Autoregressive) & \textbf{39.6} & 22.3 & 44.9 & 1.0 \\
    \cline{4-9}
     &  &  & \multirow[t]{2}{*}{TLI} & DeepSeek-R1-Distill-Qwen-1.5B & 93.4 & 31.9 & 31.1 & 0.69 \\
     &  &  &  & vicuna-68m & 75.4 & \textbf{20.3} & \textbf{49.1} & \textbf{1.09} \\
    \cline{4-9}
     &  &  & \multirow[t]{2}{*}{SD} & Llama-3.2-1B & 51.8 & 22.6 & 44.2 & 0.98 \\
     &  &  &  & Llama-3.2-3B & 60.2 & 29.2 & 34.2 & 0.76 \\
    \cline{2-9} \cline{3-9} \cline{4-9}
     & \multirow[t]{5}{*}{openai\_humaneval} & \multirow[t]{3}{*}{1 * H100 NVL} & AR & No Drafter (Autoregressive) & \textbf{31.2} & \textbf{22.3} & \textbf{44.8} & \textbf{1.0} \\
    \cline{4-9}
     &  &  & TLI & tiny\_starcoder\_py & 43.5 & 23.6 & 42.1 & 0.94 \\
    \cline{4-9}
     &  &  & SD & CodeLlama-7b-Instruct-hf & 99.0 & 38.0 & 26.0 & 0.58 \\
    \cline{3-9} \cline{4-9}
     &  & \multirow[t]{2}{*}{1 * RTX 6000} & AR & No Drafter (Autoregressive) & \textbf{73.4} & 41.1 & 24.3 & 1.0 \\
    \cline{4-9}
    &  &  & TLI & tiny\_starcoder\_py & 82.5 & 39.5 & 25.1 & 1.03 \\
     &  &  & & CodeLlama-7b-Instruct-hf & 218.6 & 63.0 & 15.7 & 0.65 \\
     &  &  &  & DeepSeek-R1-Distill-Qwen-1.5B & 145.8 & \textbf{35.7} & \textbf{26.0} & \textbf{1.07} \\
    \cline{1-9} \cline{2-9} \cline{3-9} \cline{4-9}
\end{longtable}
}

\section{Vocabularies and Overlap}\label{appendix:vocabs}
This section examines the vocabularies of widely used off-the-shelf target and drafter models.
\cref{table:vocab-sizes} shows the vocabulary sizes of widely used target and drafter models.
\cref{table:vocabs-overlap} shows the vocabulary overlap between the target and drafter models.
\cref{table:vocabs-overlap-for-tasks} shows the ratio of the number of tokens in the intersection between the target and draft vocabularies $|T' \cap D'|$ to the number of tokens in the target vocabulary $|T'|$, considering only the tokens that appeared in $50$ randomly selected prompts for the given task.

\begin{table}[htbp]
    \centering
    \scriptsize
    \caption{Vocabulary sizes of widely used target and drafter models.}
    \begin{tabular}{|l|c|}
        \hline
        \textbf{Target Model} & \textbf{Vocabulary Size $|T|$} \\ \hline
        google/Gemma-2-9b & 256,000 \\ \hline
        meta-llama/Llama-3.1-70B & 128,256 \\ \hline
        mistralai/Mixtral-8x22B-Instruct-v0.1 & 32,768 \\ \hline
        microsoft/Phi-3-medium-128k-instruct & 32,011 \\ \hline
        codellama/CodeLlama-13b-Instruct-hf & 32,016 \\ \hline
        \hline
        \textbf{Drafter Model} & \textbf{Vocabulary Size $|D|$} \\ \hline
        Qwen/Qwen2-0.5B-Instruct & 151,646 \\ \hline
        bigcode/tiny\_starcoder\_py & 49,152 \\ \hline
        double7/vicuna-68m & 32,000 \\ \hline
    \end{tabular}
    \label{table:vocab-sizes}
\end{table}
\begin{table}[htbp]
    \centering
    \scriptsize
    \caption{Vocabulary overlap metrics for widely used target and drafter models: the size of the intersection between the target vocabulary and the draft vocabulary, and the ratio of the intersection size to the target vocabulary size. We can see a wide range of overlap sizes and ratios.}
    \begin{tabular}{|l|l|c|c|}
        \hline
        \textbf{Target Model} & \textbf{Drafter Model} & \textbf{$|T \cap D|$} & \textbf{$|T \cap D| / |T|$} \\ \hline
        Llama-3.1-70B & Qwen2-0.5B-Instruct & 109,566 & 0.85 \\ \hline
        Gemma-2-9b & vicuna-68m & 30,489 & 0.12 \\ \hline
        Mixtral-8x22B-Instruct-v0.1 & vicuna-68m & 24,184 & 0.74 \\ \hline
        Mixtral-8x22B-Instruct-v0.1 & Qwen2-0.5B-Instruct & 10,566 & 0.32 \\ \hline
        Phi-3-medium-128k-instruct & Qwen2-0.5B-Instruct & 9,588 & 0.30 \\ \hline
        CodeLlama-13b-Instruct-hf & tiny\_starcoder\_py & 8,481 & 0.26 \\ \hline
    \end{tabular}
    \label{table:vocabs-overlap}
\end{table}
\begin{table}[htbp]
    \centering
    \scriptsize
    \caption{The ratio of the number of tokens in the intersection between the target and draft vocabularies $|T' \cap D'|$ to the number of tokens in the target vocabulary $|T'|$, considering only the tokens that appeared in $50$ randomly selected prompts for the given task. Note that \(|T' \cap D'| / |T'|\) for a given task could differ from \(|T \cap D| / |T|\) because some tokens of \(T\) or \(D\) might not appear in the prompts of the given task.}
    \resizebox{0.5\textwidth}{!}{
    \begin{tabular}{|l|l|l|c|c|}
    \hline
    \textbf{Target Model} & \textbf{Drafter Model} & \textbf{Task} & \textbf{Dataset} & \textbf{$\frac{|T' \cap D'|}{|T'|}$} \\
    \hline
CodeLlama-13b-Instruct-hf & CodeLlama-7b-Instruct-hf & coding  & openai\_humaneval &  1.0 \\
CodeLlama-13b-Instruct-hf & tiny\_starcoder\_py & coding  & openai\_humaneval &  0.86 \\
\hline
DeepSeek-R1-Distill-Llama-70B & CodeLlama-7b-Instruct-hf & coding  & openai\_humaneval &  0.84 \\
DeepSeek-R1-Distill-Llama-70B & DeepSeek-R1-Distill-Llama-8B & coding  & openai\_humaneval &  1.0 \\
DeepSeek-R1-Distill-Llama-70B & DeepSeek-R1-Distill-Llama-8B & long-ctx summ  & scrolls &  1.0 \\
DeepSeek-R1-Distill-Llama-70B & DeepSeek-R1-Distill-Llama-8B & summ  & cnn\_dailymail &  1.0 \\
DeepSeek-R1-Distill-Llama-70B & DeepSeek-R1-Distill-Qwen-1.5B & coding  & openai\_humaneval &  1.0 \\
DeepSeek-R1-Distill-Llama-70B & DeepSeek-R1-Distill-Qwen-1.5B & long-ctx summ  & scrolls &  1.0 \\
DeepSeek-R1-Distill-Llama-70B & DeepSeek-R1-Distill-Qwen-1.5B & summ  & cnn\_dailymail &  1.0 \\
DeepSeek-R1-Distill-Llama-70B & Llama-3.1-8B & long-ctx summ  & scrolls &  1.0 \\
DeepSeek-R1-Distill-Llama-70B & Llama-3.1-8B & summ  & cnn\_dailymail &  1.0 \\
DeepSeek-R1-Distill-Llama-70B & Llama-3.2-1B & long-ctx summ  & scrolls &  1.0 \\
DeepSeek-R1-Distill-Llama-70B & Llama-3.2-1B & summ  & cnn\_dailymail &  1.0 \\
DeepSeek-R1-Distill-Llama-70B & Llama-3.2-3B & long-ctx summ  & scrolls &  1.0 \\
DeepSeek-R1-Distill-Llama-70B & Llama-3.2-3B & summ  & cnn\_dailymail &  1.0 \\
DeepSeek-R1-Distill-Llama-70B & tiny\_starcoder\_py & coding  & openai\_humaneval &  0.94 \\
DeepSeek-R1-Distill-Llama-70B & vicuna-68m & long-ctx summ  & scrolls &  0.97 \\
DeepSeek-R1-Distill-Llama-70B & vicuna-68m & summ  & cnn\_dailymail &  0.98 \\
\hline
DeepSeek-R1-Distill-Llama-8B & CodeLlama-7b-Instruct-hf & coding  & openai\_humaneval &  0.77 \\
DeepSeek-R1-Distill-Llama-8B & DeepSeek-R1-Distill-Qwen-1.5B & coding  & openai\_humaneval &  1.0 \\
DeepSeek-R1-Distill-Llama-8B & DeepSeek-R1-Distill-Qwen-1.5B & long-ctx summ  & scrolls &  1.0 \\
DeepSeek-R1-Distill-Llama-8B & DeepSeek-R1-Distill-Qwen-1.5B & summ  & cnn\_dailymail &  1.0 \\
DeepSeek-R1-Distill-Llama-8B & Llama-3.2-1B & long-ctx summ  & scrolls &  1.0 \\
DeepSeek-R1-Distill-Llama-8B & Llama-3.2-1B & summ  & cnn\_dailymail &  1.0 \\
DeepSeek-R1-Distill-Llama-8B & Llama-3.2-3B & long-ctx summ  & scrolls &  1.0 \\
DeepSeek-R1-Distill-Llama-8B & Llama-3.2-3B & summ  & cnn\_dailymail &  1.0 \\
DeepSeek-R1-Distill-Llama-8B & tiny\_starcoder\_py & coding  & openai\_humaneval &  0.94 \\
DeepSeek-R1-Distill-Llama-8B & vicuna-68m & long-ctx summ  & scrolls &  0.98 \\
DeepSeek-R1-Distill-Llama-8B & vicuna-68m & summ  & cnn\_dailymail &  0.98 \\
\hline
DeepSeek-R1-Distill-Qwen-14B & CodeLlama-7b-Instruct-hf & coding  & openai\_humaneval &  0.83 \\
DeepSeek-R1-Distill-Qwen-14B & DeepSeek-R1-Distill-Qwen-1.5B & coding  & openai\_humaneval &  1.0 \\
DeepSeek-R1-Distill-Qwen-14B & DeepSeek-R1-Distill-Qwen-1.5B & long-ctx summ  & scrolls &  1.0 \\
DeepSeek-R1-Distill-Qwen-14B & DeepSeek-R1-Distill-Qwen-1.5B & summ  & cnn\_dailymail &  1.0 \\
DeepSeek-R1-Distill-Qwen-14B & tiny\_starcoder\_py & coding  & openai\_humaneval &  0.93 \\
DeepSeek-R1-Distill-Qwen-14B & vicuna-68m & long-ctx summ  & scrolls &  0.98 \\
DeepSeek-R1-Distill-Qwen-14B & vicuna-68m & summ  & cnn\_dailymail &  0.99 \\
\hline
DeepSeek-R1-Distill-Qwen-32B & CodeLlama-7b-Instruct-hf & coding  & openai\_humaneval &  0.83 \\
DeepSeek-R1-Distill-Qwen-32B & DeepSeek-R1-Distill-Qwen-1.5B & coding  & openai\_humaneval &  1.0 \\
DeepSeek-R1-Distill-Qwen-32B & DeepSeek-R1-Distill-Qwen-1.5B & long-ctx summ  & scrolls &  1.0 \\
DeepSeek-R1-Distill-Qwen-32B & DeepSeek-R1-Distill-Qwen-1.5B & summ  & cnn\_dailymail &  1.0 \\
DeepSeek-R1-Distill-Qwen-32B & tiny\_starcoder\_py & coding  & openai\_humaneval &  0.93 \\
DeepSeek-R1-Distill-Qwen-32B & vicuna-68m & long-ctx summ  & scrolls &  0.98 \\
DeepSeek-R1-Distill-Qwen-32B & vicuna-68m & summ  & cnn\_dailymail &  0.98 \\
\hline
DeepSeek-R1-Distill-Qwen-7B & CodeLlama-7b-Instruct-hf & coding  & openai\_humaneval &  0.83 \\
DeepSeek-R1-Distill-Qwen-7B & DeepSeek-R1-Distill-Qwen-1.5B & coding  & openai\_humaneval &  1.0 \\
DeepSeek-R1-Distill-Qwen-7B & DeepSeek-R1-Distill-Qwen-1.5B & long-ctx summ  & scrolls &  1.0 \\
DeepSeek-R1-Distill-Qwen-7B & DeepSeek-R1-Distill-Qwen-1.5B & summ  & cnn\_dailymail &  1.0 \\
DeepSeek-R1-Distill-Qwen-7B & tiny\_starcoder\_py & coding  & openai\_humaneval &  0.93 \\
DeepSeek-R1-Distill-Qwen-7B & vicuna-68m & long-ctx summ  & scrolls &  0.98 \\
DeepSeek-R1-Distill-Qwen-7B & vicuna-68m & summ  & cnn\_dailymail &  0.99 \\
\hline
Llama-3.1-70B & Llama-3.1-8B & coding  & openai\_humaneval &  1.0 \\
Llama-3.1-70B & Llama-3.1-8B & long-ctx summ  & scrolls &  1.0 \\
Llama-3.1-70B & Llama-3.1-8B & summ  & cnn\_dailymail &  1.0 \\
Llama-3.1-70B & Llama-3.2-1B & coding  & openai\_humaneval &  1.0 \\
Llama-3.1-70B & Llama-3.2-1B & long-ctx summ  & scrolls &  1.0 \\
Llama-3.1-70B & Llama-3.2-1B & summ  & cnn\_dailymail &  1.0 \\
Llama-3.1-70B & Llama-3.2-3B & coding  & openai\_humaneval &  1.0 \\
Llama-3.1-70B & Llama-3.2-3B & long-ctx summ  & scrolls &  1.0 \\
Llama-3.1-70B & Llama-3.2-3B & summ  & cnn\_dailymail &  1.0 \\
Llama-3.1-70B & Qwen2.5-0.5B-Instruct & coding  & openai\_humaneval &  1.0 \\
Llama-3.1-70B & Qwen2.5-0.5B-Instruct & long-ctx summ  & scrolls &  1.0 \\
Llama-3.1-70B & Qwen2.5-0.5B-Instruct & summ  & cnn\_dailymail &  1.0 \\
\hline
Llama-3.1-70B-Instruct & Llama-3.1-8B-Instruct & coding  & openai\_humaneval &  1.0 \\
Llama-3.1-70B-Instruct & Llama-3.1-8B-Instruct & long-ctx summ  & scrolls &  1.0 \\
Llama-3.1-70B-Instruct & Llama-3.1-8B-Instruct & summ  & cnn\_dailymail &  1.0 \\
Llama-3.1-70B-Instruct & Llama-3.2-1B-Instruct & coding  & openai\_humaneval &  1.0 \\
Llama-3.1-70B-Instruct & Llama-3.2-1B-Instruct & long-ctx summ  & scrolls &  1.0 \\
Llama-3.1-70B-Instruct & Llama-3.2-1B-Instruct & summ  & cnn\_dailymail &  1.0 \\
Llama-3.1-70B-Instruct & Llama-3.2-3B-Instruct & coding  & openai\_humaneval &  1.0 \\
Llama-3.1-70B-Instruct & Llama-3.2-3B-Instruct & long-ctx summ  & scrolls &  1.0 \\
Llama-3.1-70B-Instruct & Llama-3.2-3B-Instruct & summ  & cnn\_dailymail &  1.0 \\
Llama-3.1-70B-Instruct & Qwen2.5-0.5B-Instruct & coding  & openai\_humaneval &  1.0 \\
Llama-3.1-70B-Instruct & Qwen2.5-0.5B-Instruct & long-ctx summ  & scrolls &  1.0 \\
Llama-3.1-70B-Instruct & Qwen2.5-0.5B-Instruct & summ  & cnn\_dailymail &  1.0 \\
Llama-3.1-70B-Instruct & vicuna-68m & coding  & openai\_humaneval &  0.84 \\
Llama-3.1-70B-Instruct & vicuna-68m & long-ctx summ  & scrolls &  0.97 \\
Llama-3.1-70B-Instruct & vicuna-68m & summ  & cnn\_dailymail &  0.99 \\
\hline
Llama-3.1-8B-Instruct & Llama-3.2-1B-Instruct & coding  & openai\_humaneval &  1.0 \\
Llama-3.1-8B-Instruct & Llama-3.2-1B-Instruct & long-ctx summ  & scrolls &  1.0 \\
Llama-3.1-8B-Instruct & Llama-3.2-1B-Instruct & summ  & cnn\_dailymail &  1.0 \\
Llama-3.1-8B-Instruct & Llama-3.2-3B-Instruct & coding  & openai\_humaneval &  1.0 \\
Llama-3.1-8B-Instruct & Llama-3.2-3B-Instruct & long-ctx summ  & scrolls &  1.0 \\
Llama-3.1-8B-Instruct & Llama-3.2-3B-Instruct & summ  & cnn\_dailymail &  1.0 \\
Llama-3.1-8B-Instruct & Qwen2.5-0.5B-Instruct & coding  & openai\_humaneval &  1.0 \\
Llama-3.1-8B-Instruct & Qwen2.5-0.5B-Instruct & long-ctx summ  & scrolls &  1.0 \\
Llama-3.1-8B-Instruct & Qwen2.5-0.5B-Instruct & summ  & cnn\_dailymail &  1.0 \\
Llama-3.1-8B-Instruct & vicuna-68m & coding  & openai\_humaneval &  0.84 \\
Llama-3.1-8B-Instruct & vicuna-68m & long-ctx summ  & scrolls &  0.98 \\
Llama-3.1-8B-Instruct & vicuna-68m & summ  & cnn\_dailymail &  0.98 \\
\hline
Mixtral-8x22B-Instruct-v0.1 & Qwen2.5-0.5B-Instruct & coding  & openai\_humaneval &  0.78 \\
Mixtral-8x22B-Instruct-v0.1 & Qwen2.5-0.5B-Instruct & long-ctx summ  & scrolls &  0.89 \\
Mixtral-8x22B-Instruct-v0.1 & Qwen2.5-0.5B-Instruct & summ  & cnn\_dailymail &  0.9 \\
Mixtral-8x22B-Instruct-v0.1 & vicuna-68m & coding  & openai\_humaneval &  0.99 \\
Mixtral-8x22B-Instruct-v0.1 & vicuna-68m & long-ctx summ  & scrolls &  0.99 \\
Mixtral-8x22B-Instruct-v0.1 & vicuna-68m & summ  & cnn\_dailymail &  0.99 \\
\hline
Qwen2.5-1.5B-Instruct & Qwen2.5-0.5B-Instruct & long-ctx summ  & scrolls &  1.0 \\
Qwen2.5-1.5B-Instruct & vicuna-68m & long-ctx summ  & scrolls &  0.98 \\
\hline
gemma-2-9b-it & gemma-2-2b-it & coding  & openai\_humaneval &  1.0 \\
gemma-2-9b-it & gemma-2-2b-it & long-ctx summ  & scrolls &  1.0 \\
gemma-2-9b-it & gemma-2-2b-it & summ  & cnn\_dailymail &  1.0 \\
gemma-2-9b-it & vicuna-68m & coding  & openai\_humaneval &  1.0 \\
gemma-2-9b-it & vicuna-68m & long-ctx summ  & scrolls &  0.99 \\
gemma-2-9b-it & vicuna-68m & summ  & cnn\_dailymail &  0.99 \\
\hline
phi-4 & Phi-3.5-mini-instruct & coding  & openai\_humaneval &  0.77 \\
phi-4 & Phi-3.5-mini-instruct & long-ctx summ  & scrolls &  0.98 \\
phi-4 & Phi-3.5-mini-instruct & summ  & cnn\_dailymail &  0.99 \\
phi-4 & Qwen2.5-0.5B-Instruct & coding  & openai\_humaneval &  1.0 \\
phi-4 & Qwen2.5-0.5B-Instruct & long-ctx summ  & scrolls &  1.0 \\
phi-4 & Qwen2.5-0.5B-Instruct & summ  & cnn\_dailymail &  1.0 \\
    \hline
    \end{tabular}}
    \label{table:vocabs-overlap-for-tasks}
\end{table}

\section{Injectivity of Tokenizers Under the CMM-DM Dataset}\label{appendix:tokenizer-injectivity}
The experiment sampled uniformly at random examples from the CNN-DM dataset \citep{nallapati2016abstractive}, and took the prefix of 100 characters from each example. Using a SentencePiece tokenizer \citep{kudo2018sentencepiece} or various other Hugging Face Transformers tokenizers \citep{wolf2020transformers}, we encoded the prefix into tokens, and then decoded the tokens back into text. We then checked whether the original prefix could be recovered by checking whether \( s = \decode(\encode(s)) \). While a tokenizer may implement a non-injective function in general, this experiment specifically tested its injectivity on the given dataset. The results of our experiment are summarized in \cref{table:injectivity_results}.

\begin{table}[htbp]
\centering
\scriptsize
\caption{Results of injectivity tests for various tokenizers.}
\begin{tabular}{|c|c|c|}
\hline
\textbf{Library} & \textbf{Tokenizer} & \textbf{Injective} \\
\hline
SentencePiece & SentencePiece & True \\
\hline
Hugging Face & gpt2 & True \\
\hline
Hugging Face & double7/vicuna-68m & False \\
\hline
Hugging Face & bigcode/tiny\_starcoder\_py & True \\
\hline
Hugging Face & Qwen/Qwen2-0.5B-Instruct & True \\
\hline
\end{tabular}
\label{table:injectivity_results}
\end{table}

\section{Proofs}\label{appendix:proofs}

\ExcatMatchingVsSpeculativeDecoding*
\begin{proof}
    The expected acceptance rate of the standard speculative decoding verification method is $\alpha_{\text{SD}} = \sum_{t \in T} \min\{p(t), q(t)\}$ by \citet{leviathan2023fast}. If \( q = p \), we have $\alpha_{\text{SD}} = \sum_{t \in T} \min\{p(t), p(t)\} = \sum_{t \in T} p(t) = 1$. For exact matching, a token \( t \) is accepted if it is sampled by both the draft and the target models. Since these are independent events, the probability of accepting \( t \) is $p(t)~\cdot~p(t)~=~p(t)^2$. Thus, we have $\alpha_{\text{EM}} = \sum_{t \in T} p(t)^2$. For any \( p(t) \) such that \( 0 < p(t) < 1 \), it holds that \( p(t)^2 < p(t) \). Summing over all tokens \( t \in T \), we get that $\sum_{t \in T} p(t)^2 < \sum_{t \in T} p(t) = 1$. Therefore, \( \alpha_{\text{EM}} < \alpha_{\text{SD}} \) for any non-trivial target distribution~\( p \).
\end{proof}

\StringSDIsLossless*
\begin{proof}
    Denote the probability of accepting the token $t_1$ by $\Pr\left[\text{accept } t \mid t \right]$. We have that $\Pr\left[\text{accept } t \mid t \right] = 1$ if $p(t) \ge \psi(t)$, and $\frac{p(t)}{\psi(t)}$ otherwise. We also have that the probability of sampling tokens from $q$ such that their concatenation forms $t$ is $\psi(t)$. Therefore,
    $\sum_{t} \Pr\left[\text{accept } t \right] = \sum_{t} \Pr\left[\text{accept } t \mid t \right] \cdot \Pr\left[t \right] = \sum_{t} \min\set{p(t), \psi(t)}$. The probability of outputting $t$ is then $\Pr\left[\text{output } t \right] = \Pr\left[\text{accept } t \right] + (1 - \sum_{t} \Pr\left[\text{accept } t \right]) \cdot \frac{p(t) - \min\set{p(t), \psi(t)}}{1 - \sum_{t'} \min\set{p(t'), \psi(t')}} = p(t)$.
\end{proof}

\ComplexityOfPsi*
\begin{proof}
    We can approach this counting problem by considering it as a combinatorial composition, specifically the number of ways to write the length \( m \) of the target token \( t \) as the sum of a sequence of strictly positive integers.
    Consider the token \( t \) of length \( m \), which can be decomposed into a sequence of tokens \( t_1, t_2, \dots, t_m \). Each possible partition of \( m \) into smaller segments corresponds to a unique way of concatenating draft tokens from the vocabulary.
    The problem can be reduced to counting how many distinct ways we can concatenate these tokens to obtain the desired target token \( t \). There are exactly \( 2^{m-1} \) ways to achieve this because, at each position between the tokens, we have two choices: either to concatenate the next token with the previous segment or to keep it separate. 
    For example, given the sequence \( t_1, t_2, \dots, t_m \), the possible compositions include \( (t_1 \oplus t_2), t_3, \dots, t_m \); \( t_1, (t_2 \oplus t_3), \dots, t_m \); and \( (t_1 \oplus t_2 \oplus t_3), t_4, \dots, t_m \), and so forth, covering all possible ways to concatenate adjacent tokens. Thus, the total number of valid concatenations is \( 2^{m-1} \), which follows from the combinatorial nature of partitioning the sequence into contiguous segments.
\end{proof}

\SamplingFromTargetVocabIncreasesTheAR*
\begin{proof}
    By \citet{leviathan2023fast}, the expected acceptance rate is the sum of the minimum probabilities of the target and draft distributions, namely, we have
    $\alpha_1 = \sum_{x \in T \cup D} \min\set{p'(x), q_1(x)}
    = \sum_{x \in T} \min\set{p'(x), q_1(x)}
    \le \sum_{x \in T} \min\set{p'(x), q_2(x)}
    = \sum_{x \in T \cup D} \min\set{p'(x), q_2(x)}
    = \alpha_2$
    since $\sum_{x \in T} q(x) \le 1$. The output tokens distribute according to $p'$ because the rejection sampling algorithm of speculative decoding preserves the target distribution. Since $p'(x) = p(x)$ for $x \in T$, we have that the output tokens distribute according to $p$.
\end{proof}


\end{document}